\def\eqref#1{equation~\ref{#1}}
\def\1{\bm{1}}
\def\rvc{{\mathbf{c}}}
\def\rvx{{\mathbf{x}}}
\def\rvy{{\mathbf{y}}}
\def\rvz{{\mathbf{z}}}
\DeclareMathAlphabet{\mathsfit}{\encodingdefault}{\sfdefault}{m}{sl}
\SetMathAlphabet{\mathsfit}{bold}{\encodingdefault}{\sfdefault}{bx}{n}
\DeclareMathOperator*{\argmax}{arg\,max}
\DeclareMathOperator*{\argmin}{arg\,min}
\newcommand{\modelname}{SA}
\newtheorem{lemma}{Lemma}
\title{Selective Amnesia: A Continual Learning Approach to Forgetting in Deep Generative Models}
\author{Alvin Heng\textsuperscript{1}, Harold Soh\textsuperscript{1,2}\\ 
\small \textsuperscript{1}Dept. of Computer Science, National University of Singapore\\
\small \textsuperscript{2}Smart Systems Institute, National University of Singapore \\
\texttt{\{alvinh, harold\}@comp.nus.edu.sg}
}
\begin{document}

\maketitle

\begin{abstract}
     The recent proliferation of large-scale text-to-image models has led to growing concerns that such models may be misused to generate harmful, misleading, and inappropriate content. Motivated by this issue, we derive a technique inspired by continual learning to selectively forget concepts in pretrained deep generative models. Our method, dubbed Selective Amnesia, enables controllable forgetting  where a user can specify how a concept should be forgotten. Selective Amnesia can be applied to conditional variational likelihood models, which encompass a variety of popular deep generative frameworks, including variational autoencoders and large-scale text-to-image diffusion models. Experiments across different models demonstrate that our approach induces forgetting on a variety of concepts, from entire classes in standard datasets to celebrity and nudity prompts in text-to-image models. Our code is publicly available at \href{https://github.com/clear-nus/selective-amnesia}{https://github.com/clear-nus/selective-amnesia}.
\end{abstract}

\begin{figure}[H]
    \centering
    \includegraphics[width=0.93\textwidth, trim={0.3cm, 1.7cm, 0.5cm, 1.6cm}, clip]{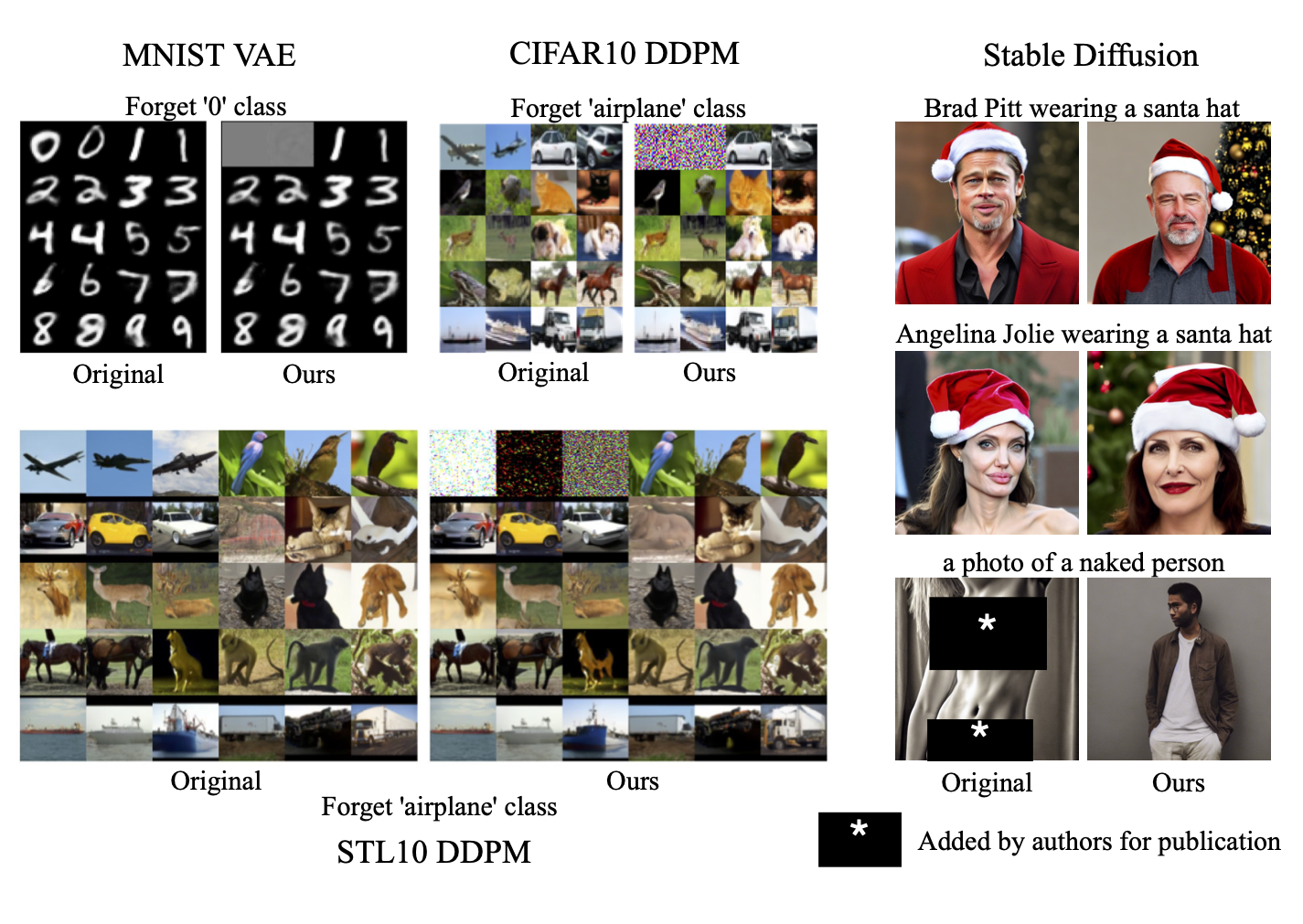}
    \caption{Qualitative results of our method, Selective Amnesia (SA). \modelname{} can be applied to a variety of models, from forgetting textual prompts such as specific celebrities or nudity in text-to-image models to discrete classes in VAEs and diffusion models (DDPM).}
    \label{fig:main}
\end{figure}

\newpage
\section{Introduction}
Deep generative models have made significant strides in recent years, with large-scale text-to-image models attracting immense interest due to their excellent generation capabilities. Unfortunately, these models can also be misused to create realistic-looking images of harmful, discriminatory and inappropriate content~\cite{schramowski2022safe}. For instance, one could  generate Deepfakes --- convincing fake images --- and inappropriate content involving real individuals (e.g., nude celebrities)~\cite{mirsky2021creation,verdoliva2020media}. 
A na\"ive approach to address this issue is to omit specific concepts or individuals from the training dataset. However, filtering datasets of billions of images is a challenge in itself. Moreover, it entails retraining the entire model from scratch each time a new concept is to be forgotten, which is costly in terms of compute and time. In this work, our goal is to retrain the model to only forget specific concepts, i.e., to induce \textit{selective amnesia}.

Several efforts in this direction have been made in the field of data forgetting~\cite{cao2015towards,bourtoule2021machine, golatkar2020eternal, nguyen2020variational}, as well as concept erasure in the context of text-to-image diffusion models~\cite{schramowski2022safe,gandikota2023erasing,zhang2023forget}. However, works in the former either focus on discriminative models, or require special partitioning of data and model during training. The few works in the latter nascent field of concept erasure target text-to-image diffusion models and work by exploiting specific design characteristics of these models. Here, we aim to develop a general framework that is applicable to a variety of pretrained generative models, \emph{without} access to the original training data.

Our key insight is that selective forgetting can be framed from the perspective of continual learning. Ironically, the focus in continual learning has been on \emph{preventing} forgetting; typically, given parameters for task $A$, we would like to train the model to perform task $B$ without forgetting task $A$, i.e., $\theta_A \shortrightarrow \theta_{A,B}$. In our case, we have a model that is trained to generate $A$ \emph{and} $B$, and we would like the model to only generate $B$ while forgetting $A$, i.e., $\theta_{A,B} \shortrightarrow \theta_B$.

In this work, 
we show that well-known methods in continual learning can be unified into a single objective function that can be used to train models to forget. Unlike prior works, our method allows for \emph{controllable forgetting}, where the forgotten concept can be remapped to a user-defined concept that is deemed more appropriate. We focus our scope on {conditional} variational likelihood models, which includes popular deep generative frameworks, namely Variational Autoencoders (VAEs)~\cite{kingma2013auto} and Denoising Diffusion Probabilistic Models (DDPMs)~\cite{ho2020denoising}. To demonstrate its generality, we apply our method, dubbed Selective Amnesia (SA) to datasets and models of varying complexities, from simple VAEs trained on MNIST, DDPMs trained on CIFAR10 and STL10, to the open-source Stable Diffusion~\cite{rombach2022high} text-to-image model trained on a large corpus of internet data. Our results shows that SA causes generative models to forget diverse concepts such as discrete classes to celebrities and nudity in a manner that is  customizable by the user.

Our paper is structured as follows. We cover the relevant background and related works in Sec.~\ref{sec:background}. We introduce Selective Amnesia (SA) in Sec.~\ref{sec:proposed}, followed by in-depth experimental results in Sec.~\ref{sec:experiments}. Finally, we  conclude in Sec.~\ref{sec:conclusion} by briefly discussing the limitations and broader impacts of our work.

\section{Background and Related Work}\label{sec:background}

\subsection{Variational Generative Models}

\paragraph{Conditional Variational Autoencoders.}
Conditional Variational Autoencoders~\cite{kingma2013auto} are generative models of the form $p(\rvx, \rvz|\theta, \rvc) = p(\rvx|\theta,\rvc,\rvz)p(\rvz|\theta, \rvc)$, where $\rvx$ is the data (e.g., an image), $\rvc$ is the concept/class, and $p(\rvz|\theta, \rvc)$ is a prior over the latent variables $\rvz$. Due to the intractability of the posterior $p(\rvz|\theta,\rvx, \rvc)$, VAEs adopt an approximate posterior $q(\rvz|\phi, \rvx, \rvc)$ and maximize the evidence lower bound (ELBO),
\begin{equation*}
\log p(\rvx|\theta, \rvc) \geq \log p(\rvx|\theta, \rvz, \rvc) + D_{KL}(q(\rvz|\phi, \rvx, \rvc) || p(\rvz|\theta, \rvc)) = \textrm{ELBO}(\rvx|\theta,\rvc).
\end{equation*}

\paragraph{Conditional Diffusion Models.}
Diffusion models~\cite{ho2020denoising} are a class of generative models that sample from a distribution through an iterative Markov denoising process. A sample $\rvx_T$ is typically sampled from a Gaussian distribution and gradually denoised for $T$ time steps, finally recovering a clean sample $\rvx_0$. In practice, the model is trained to predict the noise $\epsilon(\rvx_t, t, \rvc|\theta)$ that must be removed from the sample $\rvx_t$ with the following reweighted variational bound: $\textrm{ELBO}(\rvx|\theta, \rvc) = \sum_{t=1}^T ||\boldsymbol{\epsilon}(\rvx_t, t, \rvc|\theta) - \boldsymbol{\epsilon}||^2$, 
where $\rvx_t = \sqrt{\bar{\alpha}}_t \rvx_0 + \sqrt{1-\bar{\alpha}_t}\boldsymbol{\epsilon}$ for $\boldsymbol{\epsilon} \sim \mathcal{N}(\mathbf{0}, \mathbb{I})$, $\bar{\alpha}_t$ are constants related to the noise schedule in the forward noising process. Sampling from a conditional diffusion model can be carried out using classifier-free guidance~\cite{ho2022classifier}.

\vspace{-0.1cm}
\subsection{Continual Learning}
\vspace{-0.1cm}
The field of continual learning is primarily concerned with the sequential learning of tasks in deep neural networks, while avoiding catastrophic forgetting. A variety of methods have been proposed to tackle this problem, including regularization approaches~\cite{kirkpatrick2017overcoming, zenke2017continual}, architectural modifications~\cite{rusu2016progressive, draelos2017neurogenesis}, and data replay~\cite{shin2017continual}. We discuss two popular approaches that will be used in our work: Elastic Weight Consolidation and Generative Replay.
\vspace{-0.15cm}
\paragraph{Elastic Weight Consolidation.} Elastic Weight Consolidation (EWC)~\cite{kirkpatrick2017overcoming} adopts a Bayesian approach to model the posterior of the weights for accomplishing two tasks, $D_A$ and $D_B$, given a model $\theta^*$ that has learnt $D_A$. The Laplace approximation is applied to the posterior over the initial task $D_A$, giving rise to a quadratic penalty that slows down learning of weights that are most relevant to the initial task. Concretely, the posterior is given by $\log p(\theta|D_A,D_B) = \log p(D_B|\theta) - \lambda \sum_i \frac{F_i}{2}(\theta_i - \theta_i^*)^2$,
where $F$ is the Fisher information matrix (FIM) and $\lambda$ is a weighting parameter. In practice, a diagonal approximation $F_i =  \mathbb{E}_{p(D|\theta^*)}[(\frac{\partial}{\partial \theta_i} \log p(D|\theta))^2]$ is adopted for computational efficiency. $F_i$ can be viewed as a sensitivity measure of the weight $\theta_i$ on the model's output. 
For variational models, we modify the $F_i$ to measure the sensitivity of $\theta_i$ on the ELBO: $F_i =  \mathbb{E}_{p(\rvx|\theta^*,\rvc)p(\rvc)}[(\frac{\partial}{\partial \theta_i} \mathrm{ELBO}(\rvx|\theta, \rvc))^2]$.  
\vspace{-0.15cm}
\paragraph{Generative Replay.}
Generative Replay (GR)~\cite{shin2017continual} was proposed as a method where a generative model can be leveraged to generate data from previous tasks, and used to augment data from the current task in the training of a discriminative model. More generally, it motivates one to leverage generative models for continual learning, whereby without needing to store any of the previous datasets, a model can be trained on all tasks simultaneously, which prevents catastrophic forgetting.

Our work leverages EWC and GR to train a model to sequentially forget certain classes and concepts. There have been several works utilizing these techniques for generative models, such as GANs~\cite{seff2017continual, lesort2019generative} and VAEs~\cite{nguyen2017variational}. However, these works tackle the traditional problem of continual learning, which seeks to prevent forgetting.
\vspace{-0.15cm}
\subsection{Data Forgetting}
\vspace{-0.15cm}
The increased focus on privacy in machine learning models in recent years, coupled with data privacy regulations such as the EU's General Data Protection Regulation, has led to significant advancements in the field of data forgetting. Data forgetting was first proposed in \cite{cao2015towards} as a statistical query learning problem. Later work proposed a dataset sharding approach to allow for efficient data deletion by deleting only specific shards~\cite{bourtoule2021machine}. Alternative methods define unlearning through information accessible directly from model weights~\cite{golatkar2020eternal}, while \cite{nguyen2020variational} proposed a variational unlearning method which relies on a posterior belief over the model weights. \citet{wu2020deltagrad} proposes a method to remove the influence of certain datapoints from a trained model by caching the model gradients during training. Our method only requires access to a trained model and does not require control over the initial training process or the original dataset, making it distinct from \cite{cao2015towards, bourtoule2021machine, wu2020deltagrad}. In addition, earlier methods are designed for discriminative tasks such as classification~\cite{golatkar2020eternal} and regression~\cite{nguyen2020variational}, while we focus on deep generative models.
\vspace{-0.15cm}
\subsection{Editing and Unlearning in Generative Models}
Several works have investigated the post-hoc editing and retraining of generative models.
Data redaction and unlearning have been proposed for GANs~\cite{kong2023data} and Normalizing Flows~\cite{malnick2022taming}.
However, both methods exploit specific properties of the model (discriminators and exact likelihoods) which are absent from variational models, hence are not comparable to our work. \citet{moon2023feature} implicitly assumes that a generator's latent space has disentangled features over concepts, which does not apply to conditional models (a given latent $z$ can be used to generate all classes just by varying the conditioning signal $c$). \citet{bau2020rewriting} directly modifies a single layer's weights in a generator to alter its semantic rules, such as removal of watermarks. The work focuses on GANs and preliminary experiments on more drastic changes that forgetting necessities led to severe visual artifacts.

\subsection{Concept Erasure in Text-to-Image Models}
\vspace{-0.15cm}
Large-scale text-to-image models~\cite{rombach2022high, saharia2022photorealistic, ramesh2022hierarchical} can be misused to generate biased, unsafe, and inappropriate content~\cite{schramowski2022safe}. To tackle this problem, Safe Latent Diffusion (SLD)~\cite{schramowski2022safe} proposes an inference scheme to guide the latent codes away from specific concepts, while Erasing Stable Diffusion (ESD)~\cite{gandikota2023erasing} proposes a training scheme to erase concepts from a model. Both methods leverage energy-based composition that is specific to the classifier-free guidance mechanism~\cite{ho2022classifier} of diffusion models. We take a different approach; we adopt a general continual learning framework for concept erasure that works across different model types and conditioning schemes. Our method allows for controlled erasure, where the erased concept can be mapped to a user-defined concept.
\section{Proposed Method: Selective Amnesia}\label{sec:proposed}

\paragraph{Problem Statement.} We consider a dataset $D$ that can be partitioned as $D = D_f \cup D_r = \{(\rvx_f^{(n)}, \rvc_f^{(n)})\}^{N_f}_{n=1} \cup \{(\rvx_r^{(m)}, 
\rvc_r^{(m)})\}^{N_r}_{m=1}$, where $D_f$ and $D_r$ correspond to the data to forget and remember respectively. The underlying distribution of $D$ is a joint distribution given by $p(\rvx,\rvc)=p(\rvx|\rvc)p(\rvc)$. We further define the distribution over concepts/class labels as $p(\rvc)=\sum_{i\in f,r} \phi_i p_i(\rvc)$ where $\sum_{i\in{f,r}} \phi_i =1$. The two concept/class distributions are disjoint such that $p_f(\rvc_r) = 0$ where $\rvc_r \sim p_r(\rvc)$ and vice-versa. 
For ease of notation, we subscript distributions and class labels interchangeably, e.g., $p_f(\rvc)$ and $p(\rvc_f)$. 

We assume access to a trained conditional generative model parameterized by $\theta^* = \argmax_\theta\mathbb{E}_{p(\rvx,\rvc)} \log p(\rvx|\theta, \rvc)$, which is the maximum likelihood estimate (MLE) of the dataset $D$. We would like to train this model such that it forgets how to generate $D_f|\rvc_f$, while remembering $D_r|\rvc_r$. A key criteria is that the training process must not require access to $D$. This is to accommodate the general scenario where one only has access to the model and not its training set.

\paragraph{A Bayesian Continual Learning Approach to Forgetting.} We start from a Bayesian perspective of continual learning  inspired by the derivation of Elastic Weight Consolidation (EWC)~\cite{kirkpatrick2017overcoming}:
\begin{align*}
    \log p(\theta|D_f, D_r) &= \log p(D_f| \theta, D_r) + \log p(\theta|D_r) - \log p(D_f|D_r) \\ &= \log p(D_f| \theta) + \log p(\theta|D_r) + C.
\end{align*}

For forgetting, we are interested in the posterior conditioned only on $D_r$, 
\begin{align}\label{eq:ewc}
\begin{split}
\log p(\theta|D_r) &= -\log p(D_f|\theta) + \log p(\theta|D_f, D_r) + C \\
&= -\log p(\rvx_f|\theta,\rvc_f) - \lambda \sum_i \frac{F_i}{2}(\theta_i - \theta_i^*)^2 + C 
\end{split}
\end{align}
where we use $\log p(D_f|\theta) = \log p(\rvx_f,\rvc_f|\theta) = \log p(\rvx_f|\theta, \rvc_f) + \log p(\rvc_f)$ so that the conditional likelihood is explicit, and substitute $\log p(\theta|D_f, D_r)$ with the Laplace approximation of EWC. Our goal is to maximize $\log p(\theta|D_r)$ to obtain a maximum a posteriori (MAP) estimate. Intuitively, maximizing Eq. (\ref{eq:ewc}) \textit{lowers} the likelihood $\log p(\rvx_f|\theta,\rvc_f)$, while keeping $\theta$ close to $\theta^*$. 

Unfortunately, direct optimization is hampered by two key issues. 
First, the optimization objective of Eq.~\ref{eq:ewc} does not involve using samples from $D_r$. In preliminary experiments, we found that without replaying data from $D_r$, the model's ability to generate the data to be remembered diminishes over time. Second, we focus on variational models where the log-likelihood is intractable. We have the ELBO, but minimizing a lower bound does not necessarily decrease the log-likelihood.
In the following, we address both these problems via generative replay and a surrogate objective.

\subsection{Generative Replay Over $D_r$}
Our approach is to unify the two paradigms of continual learning, EWC and GR, such that they can be optimized under a single objective. 
We introduce an extra likelihood term over $D_r$ that corresponds to a generative replay term, while keeping the optimization over the posterior of $D_r$ unchanged: 
\begin{equation}\label{eq:min_likelihood}
\log p(\theta|D_r) = \frac{1}{2} \left[-\log p(\rvx_f|\theta, \rvc_f) - \lambda \sum_i \frac{F_i}{2}(\theta_i - \theta_i^*)^2 + \log p(\rvx_r|\theta, \rvc_r) + \log p(\theta) \right] + C.
\end{equation}
A complete derivation is given in Appendix \ref{app:gr_proof}.
The term $\log p(\theta)$ corresponds to a prior over the parameters $\theta$. Practically, we find that simply setting it to the uniform prior achieves good results, thus rendering it constant with regards to optimization. With the expectations written down explicitly, our objective becomes
\begin{equation}\label{eq:min_ll_obj}
\mathcal{L} = 
 -\mathbb{E}_{p(\rvx|\rvc)p_f(\rvc)} \left[ \log p(\rvx|\theta, \rvc) \right] - \lambda \sum_i \frac{F_i}{2}(\theta_i - \theta_i^*)^2 + \mathbb{E}_{ p(\rvx|\rvc)p_r(\rvc)} \left[ \log p(\rvx|\theta,\rvc) \right].
\end{equation}

As we focus on conditional generative models in this work, the expectations over $p(\rvx|\rvc)p_f(\rvc)$ and $p(\rvx|\rvc)p_r(\rvc)$ can be approximated by using conditional samples generated by the model prior to training. Similarly, the FIM is calculated using samples from the model. Thus, Eq.~\ref{eq:min_ll_obj} can be optimized without the original training dataset $D$. Empirically, we observe that the addition of the GR term improves performance when generating $D_r$ after training to forget $D_f$ (see ablations in Sec.~\ref{sec:mnist_cifar10_stl10}). 

\begin{figure}
    \centering
    \includegraphics[width=0.6\textwidth]{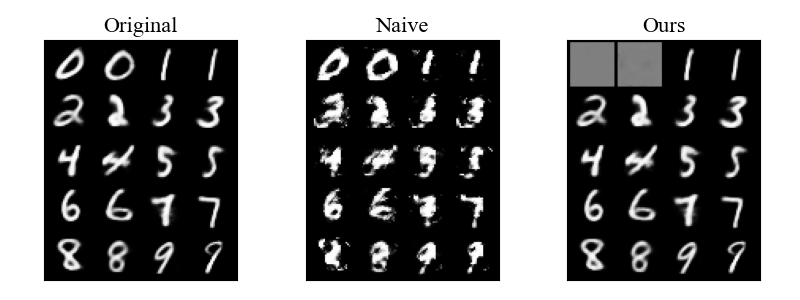}
    \caption{%
    Illustration of training VAE to forget the MNIST digit 0. The `original' column shows the baseline samples generated by the VAE. In the `naive' column, we train the VAE to optimize Eq.~\ref{eq:min_ll_obj} with $D_f$ being the `0' class, while in the `ours' column we train using the modified objective Eq.~\ref{eq:surrogate_obj}}
    \label{fig:naive_vs_ours}
    \vspace{-1em}
\end{figure}

\subsection{Surrogate Objective}

Similar to Eq.~\ref{eq:ewc},  Eq.~\ref{eq:min_ll_obj} suggests that we need to \textit{minimize} the log-likelihood of the data to forget $\mathbb{E}_{\rvx, \rvc \sim p(\rvx|\rvc)p_f(\rvc)} \left[ \log p(\rvx|\theta, \rvc) \right] $. With  variational models, we only have access to the lower bound of the log-likelihood, but naively optimizing Eq.~\ref{eq:min_ll_obj} by replacing the likelihood terms with the standard ELBOs leads to poor results.  Fig.~\ref{fig:naive_vs_ours} illustrates samples from a VAE trained to forget the MNIST digit `0'; not only has the VAE failed to forget, but %
the sample quality of the other classes has also greatly diminished (despite adding the GR term).

We propose an alternative objective that is guaranteed to lower the log-likelihood of $D_f$, as compared to the original model parameterized by $\theta^*$. Rather than attempting to directly minimize the log-likelihood or the ELBO, we \textit{maximize} the log-likelihood of a surrogate distribution of the class to forget, $q(\rvx|\rvc_f) \neq p(\rvx|\rvc_f)$. We formalize this idea in the following theorem.

\begin{restatable}{theorem}{maintheorem}\label{thm:main}
Consider a surrogate distribution $q(\rvx|\rvc)$ such that $q(\rvx|\rvc_f) \neq p(\rvx|\rvc_f)$. Assume we have access to the MLE optimum for the full dataset $\theta^* = \argmax_\theta \mathbb{E}_{p(\rvx, \rvc)} \left[ \log p(\rvx|\theta, \rvc)\right]$ such that $\mathbb{E}_{p(\rvc)}\left[ D_{KL}(p(\rvx|\rvc) || p(\rvx|\theta^*,\rvc))\right] = 0$. Define the MLE optimum over the surrogate dataset as $\theta^q = \argmax_\theta \mathbb{E}_{q(\rvx|\rvc)p_f(\rvc)} \left[ \log p(\rvx|\theta, \rvc) \right] $. Then the following inequality involving the expectations of the optimal models over the data to forget holds:
\begin{equation*}
    \mathbb{E}_{p(\rvx|\rvc)p_f(\rvc)} \left[ \log p(\rvx|\theta^q,\rvc) \right] \leq \mathbb{E}_{ p(\rvx|\rvc)p_f(\rvc)} \left[ \log p(\rvx|\theta^*,\rvc) \right].
\end{equation*}
\end{restatable}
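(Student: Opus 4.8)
The plan is to reduce the statement to the non-negativity of the Kullback--Leibler divergence (Gibbs' inequality), applied classwise and then averaged over the forget classes; the optimality of $\theta^q$ will turn out to play no essential role.

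First I would unpack the optimality hypothesis on $\theta^*$. Since $\mathbb{E}_{p(\rvc)}[D_{KL}(p(\rvx|\rvc)\|p(\rvx|\theta^*,\rvc))]=0$ and the integrand is non-negative, $D_{KL}(p(\rvx|\rvc)\|p(\rvx|\theta^*,\rvc))=0$ for $p(\rvc)$-almost every $\rvc$, and therefore $p(\rvx|\theta^*,\rvc)=p(\rvx|\rvc)$ for such $\rvc$. Because $p(\rvc)=\phi_f p_f(\rvc)+\phi_r p_r(\rvc)$ with $\phi_f>0$, the forget marginal $p_f$ is absolutely continuous with respect to $p$, so the same equality holds $p_f(\rvc)$-almost everywhere. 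Consequently the right-hand side rewrites as $\mathbb{E}_{p(\rvx|\rvc)p_f(\rvc)}[\log p(\rvx|\theta^*,\rvc)]=\mathbb{E}_{p(\rvx|\rvc)p_f(\rvc)}[\log p(\rvx|\rvc)]$, i.e.\ minus the conditional entropy of the true forget distribution.

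Next I would form the gap between the two sides. Writing each expectation as $\mathbb{E}_{\rvc\sim p_f}\mathbb{E}_{\rvx\sim p(\cdot|\rvc)}[\,\cdot\,]$ and subtracting, the inner $\rvx$-expectation for each fixed $\rvc$ becomes $\mathbb{E}_{p(\rvx|\rvc)}[\log p(\rvx|\rvc)-\log p(\rvx|\theta^q,\rvc)]=D_{KL}(p(\rvx|\rvc)\|p(\rvx|\theta^q,\rvc))\ge 0$. Averaging this non-negative quantity over $\rvc\sim p_f$ preserves the sign, yielding exactly $\mathbb{E}_{p(\rvx|\rvc)p_f(\rvc)}[\log p(\rvx|\theta^*,\rvc)]-\mathbb{E}_{p(\rvx|\rvc)p_f(\rvc)}[\log p(\rvx|\theta^q,\rvc)]\ge 0$, which is the claimed inequality.

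I expect the ``hard part'' to be conceptual rather than computational: recognizing that the bound holds with $\theta^q$ replaced by \emph{any} parameter, since $\theta^*$ already globally maximizes the expected log-likelihood under the true distribution restricted to the forget classes, and the step from training on $q$ to the stated inequality is simply Gibbs' inequality. The only genuine technical care needed is the measure-theoretic passage from ``$p(\rvc)$-almost everywhere'' to ``$p_f(\rvc)$-almost everywhere,'' which relies on $\phi_f>0$ so that the forget marginal is dominated by $p$. The hypothesis $q(\rvx|\rvc_f)\neq p(\rvx|\rvc_f)$ is what forces $\theta^q\neq\theta^*$ in general, making the inequality strict and thus ensuring the surrogate objective genuinely lowers the forget-class log-likelihood.
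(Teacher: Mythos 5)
Your proposal is correct and takes essentially the same route as the paper's proof: you first establish that $\theta^*$ reproduces $p(\rvx|\rvc)$ on the forget classes (the paper's Lemma 2, which you obtain via the almost-everywhere/absolute-continuity argument using $\phi_f > 0$), and then identify the gap as minus an expected KL divergence, concluding by Gibbs' inequality. Your observation that the optimality of $\theta^q$ is never actually needed for the inequality matches the paper's proof as well, which likewise invokes only non-negativity of $D_{KL}(p(\rvx|\rvc)\,||\,p(\rvx|\theta^q,\rvc))$ at the final step.
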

Theorem \ref{thm:main} tells us that optimizing the surrogate objective $\argmax_\theta \mathbb{E}_{q(\rvx|\rvc)p_f(\rvc)} \left[ \log p(\rvx|\theta, \rvc) \right]$ is guaranteed to reduce $\mathbb{E}_{p(\rvx|\rvc)p_f(\rvc)} \left[ \log p(\rvx|\theta,\rvc) \right]$, the problematic first term of Eq.~\ref{eq:min_ll_obj}, from its original starting point $\theta^*$. 

\begin{restatable}{corollary}{maincorollary}\label{thm:corollary}
Assume that the MLE optimum over the surrogate, $\theta^q = \argmax_\theta \mathbb{E}_{q(\rvx|\rvc)p_f(\rvc)} \left[ \log p(\rvx|\theta, \rvc) \right]$ is such that  $\mathbb{E}_{p_f(\rvc)}\left[ D_{KL}(q(\rvx|\rvc) || p(\rvx|\theta^q,\rvc)\right] = 0$. Then the gap presented in Theorem \ref{thm:main}, 
\begin{equation*}
\mathbb{E}_{p(\rvx|\rvc)p_f(\rvc)} \left[ \log p(\rvx|\theta^q,\rvc) - \log p(\rvx|\theta^*,\rvc) \right] = -\mathbb{E}_{p_f(\rvc)}\left[ D_{KL}(p(\rvx|\rvc) || q(\rvx|\rvc)) \right].
\end{equation*}
\end{restatable}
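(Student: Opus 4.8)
The plan is to leverage the two distributional fixed-point hypotheses to collapse the model likelihoods onto the underlying data distributions, after which the claimed gap falls out as a single KL divergence with essentially no further work.

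First I would translate the two zero-expectation KL hypotheses into pointwise (more precisely, $p_f$-almost-sure) equalities of densities. Since the KL divergence is non-negative, the assumption $\mathbb{E}_{p(\rvc)}[D_{KL}(p(\rvx|\rvc) || p(\rvx|\theta^*,\rvc))] = 0$ inherited from Theorem \ref{thm:main} forces $D_{KL}(p(\rvx|\rvc) || p(\rvx|\theta^*,\rvc)) = 0$ for $p(\rvc)$-almost every $\rvc$, and hence $p(\rvx|\theta^*,\rvc) = p(\rvx|\rvc)$ on the support of $p(\rvc)$; in particular this holds for $\rvc \sim p_f(\rvc)$. Identically, the corollary's added assumption $\mathbb{E}_{p_f(\rvc)}[D_{KL}(q(\rvx|\rvc) || p(\rvx|\theta^q,\rvc))] = 0$ gives $p(\rvx|\theta^q,\rvc) = q(\rvx|\rvc)$ for $p_f(\rvc)$-almost every $\rvc$.

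Next I would substitute these two identities directly into the gap $\mathbb{E}_{p(\rvx|\rvc)p_f(\rvc)}[\log p(\rvx|\theta^q,\rvc) - \log p(\rvx|\theta^*,\rvc)]$, replacing $p(\rvx|\theta^q,\rvc)$ by $q(\rvx|\rvc)$ and $p(\rvx|\theta^*,\rvc)$ by $p(\rvx|\rvc)$ inside the logarithms. The integrand becomes $\log q(\rvx|\rvc) - \log p(\rvx|\rvc)$, with the outer expectation still taken over $\rvx \sim p(\rvx|\rvc)$. Factoring the expectation over $\rvc \sim p_f$, the inner piece $\mathbb{E}_{p(\rvx|\rvc)}[\log q(\rvx|\rvc) - \log p(\rvx|\rvc)] = -\mathbb{E}_{p(\rvx|\rvc)}[\log \frac{p(\rvx|\rvc)}{q(\rvx|\rvc)}]$ is by definition $-D_{KL}(p(\rvx|\rvc) || q(\rvx|\rvc))$, and taking the expectation over $p_f(\rvc)$ yields exactly $-\mathbb{E}_{p_f(\rvc)}[D_{KL}(p(\rvx|\rvc) || q(\rvx|\rvc))]$, as claimed.

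I expect the computation itself to be routine; the only real point requiring care is the first step, where I pass from a zero expected KL to an almost-sure pointwise equality of densities. Here I must invoke non-negativity of the KL divergence together with the fact that $D_{KL}(\cdot \,||\, \cdot) = 0$ iff the two distributions coincide, and check that the ``almost every $\rvc$'' qualifier is harmless when integrating against $p_f(\rvc)$ in the final expression, since the exceptional null set contributes nothing. A secondary implicit hypothesis worth flagging is absolute continuity of the relevant distributions, so that all log-densities and KL terms are well defined and finite throughout.
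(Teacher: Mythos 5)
Your proposal is correct and follows essentially the same route as the paper: both arguments use the fact that a zero expected KL forces $p(\rvx|\theta^*,\rvc)=p(\rvx|\rvc)$ and $p(\rvx|\theta^q,\rvc)=q(\rvx|\rvc)$ (almost surely under $p_f$), and then identify the gap as $-\mathbb{E}_{p_f(\rvc)}\left[D_{KL}(p(\rvx|\rvc)\,||\,q(\rvx|\rvc))\right]$. The only cosmetic difference is that the paper reuses the second-to-last line of Theorem~\ref{thm:main}'s proof (the gap already written as $-\mathbb{E}_{p_f(\rvc)}\left[D_{KL}(p(\rvx|\rvc)\,||\,p(\rvx|\theta^q,\rvc))\right]$) and substitutes $q$ there, whereas you re-derive the gap by substituting both density equalities directly; your added care about the almost-sure qualifiers and absolute continuity is sound and, if anything, slightly more rigorous than the paper's one-line justification.
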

Corollary~\ref{thm:corollary} tells us that the greater the difference between $q(\rvx|\rvc_f)$ and $p(\rvx|\rvc_f)$, the lower the log-likelihood over $D_f$ we can achieve. For example, we could choose the uniform distribution as it is easy to sample from and is intuitively far from the distribution of natural images, which are highly structured and of low entropy. That said, users are free to choose $q(\rvx|\rvc_f)$, e.g., to induce realistic but acceptable images, and we experiment with different choices in the Stable Diffusion experiments (Sec~\ref{sec:sd}).

Putting the above elements together, the Selective Amnesia (SA) objective is given by 
\begin{align}\label{eq:surrogate_obj}
\mathcal{L} &= 
\mathbb{E}_{q(\rvx|\rvc)p_f(\rvc)} \left[ \log p(\rvx|\theta, \rvc)\right] - \lambda \sum_i \frac{F_i}{2}(\theta_i - \theta_i^*)^2 + \mathbb{E}_{ p(\rvx|\rvc)p_r(\rvc)} \left[ \log p(\rvx|\theta,\rvc) \right] \nonumber \\
&\geq \mathbb{E}_{q(\rvx|\rvc)p_f(\rvc)} \left[ \textrm{ELBO}(\rvx|\theta,\rvc) \right] - \lambda \sum_i \frac{F_i}{2}(\theta_i - \theta_i^*)^2 + \mathbb{E}_{ p(\rvx|\rvc)p_r(\rvc)} \left[ \textrm{ELBO}(\rvx|\theta,\rvc) \right] 
\end{align}
where we replace likelihood terms with their respective evidence lower bounds. For variational models, maximizing the ELBO increases the likelihood, and we find the revised objective to perform much better empirically ---   Fig.~\ref{fig:naive_vs_ours} (right) shows results of the revised objective when applied to the MNIST example, where we set $q(\rvx|\rvc_f)$ to a uniform distribution over the pixel values, $U[0,1]$. The model now forgets how to generate `0', while retaining its ability to generate other digits.

\section{Experiments}\label{sec:experiments}
In this section, we demonstrate that \modelname{} is able to forget diverse concepts, ranging from discrete classes to language prompts, in models with varying complexities. For discrete classes, we evaluate \modelname{} on MNIST, CIFAR10 and STL10. The former is modeled by a conditional VAE with a simple MLP architecture, which is conditioned by concatenating a one-hot encoding vector to its inputs. The latter two datasets are modeled by a conditional DDPM with the UNet architecture, which is conditioned using FiLM transformations~\cite{perez2018film} within each residual block. Class-conditional samples are generated with classifier-free guidance~\cite{ho2022classifier}. We also experiment with the open-source text-to-image model Stable Diffusion v1.4~\cite{rombach2022high}, where the model is conditioned on CLIP~\cite{radford2021learning} text embeddings using the cross-attention mechanism. Further experimental details can be found in Appendix \ref{app:exp_details}.

In addition to qualitative comparisons, we performed quantitative analyses using three types of metrics for the discrete classes:
\vspace{-1em}

\paragraph{Image Quality Metrics.} First, we evaluate the image quality of the classes to remember using standard metrics such as the Fréchet Inception Distance (FID), Precision, and Recall~\cite{sajjadi2018assessing,kynkaanniemi2019improved}. Ideally, we would like \modelname{} to have minimal effects on the image quality of the classes to remember.
\vspace{-1em}

\paragraph{Probability of Class to Forget.} Second, using an external classifier, we evaluate generated samples from the class to forget to ensure that the class has been successfully erased. The probability of a class to forget is defined as $\mathbb{E}_{p(\rvx|\theta,\rvc_f)}[P_\phi(\rvy=\rvc_f|\rvx)]$, where the expectation is over samples generated from our trained model, and $P_\phi(\rvy|\rvx)$ is a pretrained classifier. If we choose $q(\rvx|\rvc_f)$ to be an uninformative distribution, such as the uniform distribution, this should approach $1/N_{classes}$(=1/10 for the datasets studied here) as the classifier becomes completely uncertain which class it belongs to.
\vspace{-2em}

\paragraph{Classifier Entropy.} This is the average entropy of the classifier's output distribution given $\rvx_f$, defined as $H(P_\phi(\rvy|\rvx_f)) = -\mathbb{E}_{p(\rvx|\theta,\rvc_f)} [{\sum_i P_\phi(\rvy=\rvc_i|\rvx) \log P_\phi(\rvy=\rvc_i|\rvx)}]$. When we choose $q(\rvx|\rvc_f)$ to be the uniform distribution, all class information in the generated $\rvx_f$ should be erased. The entropy should therefore approach the theoretical maximum given by $-\sum_{i=1}^{10} \frac{1}{10}\log\frac{1}{10} = 2.30$, as the classifier becomes maximally uncertain and assigns a probability of $1/10$ to every outcome.

\subsection{MNIST, CIFAR10 and STL10 Results}\label{sec:mnist_cifar10_stl10}

In this experiment on MNIST, CIFAR10 and STL10, we attempt to forget the digit `0' in MNIST and the `airplane' class in CIFAR10 and STL10. We choose $q(\rvx|\rvc_f)$ to be a uniform distribution over the pixel values. 
In brief, the results suggest that \modelname{} has successfully induced forgetting for the relevant class, with  minor degradation in image diversity of the classes to remember. Qualitative samples are shown in Fig.~\ref{fig:main}, where it is clear that the classes to forget have been erased to noise, while the quality of the classes to remember remain visually indistinguishable from the original model. Additional samples are shown in Appendix \ref{app:addsamples_discrete}. In the following, we perform a quantitative comparison using our metrics shown in Table~\ref{tab:main_table}.

First, we evaluate the information content left in the generated $\rvx_f$ by examining the rows in Table~\ref{tab:main_table} that are highlighted in blue. On MNIST, there is a 96.7\% probability of classifying the samples of the `0' class from the original model correctly, and correspondingly a low entropy in its distribution. However, after training with $\lambda=100$, the probability drops to 5.8\%, while the entropy closely approaches the theoretical maximum of 2.30, indicating that any information in the generated $\rvx_f$ about the digit `0' has been erased. We see a similar result for the CIFAR10 and STL10 diffusion models, where the entropy increases significantly after training, although it does not approach the maximum value as closely as the VAE.

\begin{table}[]
\centering
\caption{Quantitative results for forgetting on the MNIST, CIFAR10 and STL10 datasets. $H(P_\phi(\rvy|\rvx_f))$ and $P_\phi(\rvy=\rvc_f|\rvx_f)$ indicate the entropy of the classifier's distribution and the probability of the forgotten class respectively. The rows highlighted in \textcolor[HTML]{00c2bc}{blue} correspond to the hyperparameters chosen for the images visualized in Fig.~\ref{fig:main}. The rows highlighted in \textcolor[HTML]{d97112}{orange} are ablation results for CIFAR10.}
\label{tab:main_table}
\resizebox{\textwidth}{!}{%
\begin{tabular}{ccccccc}
\hline
 &
   &
  FID $(\downarrow)$ &
  Precision $(\uparrow)$ &
  Recall $(\uparrow)$ &
  $H(P_\phi(\rvy|\rvx_f)$ &
  $P_\phi(\rvy=\rvc_f|\rvx_f)$ \\ \hline
 &
  Original &
  - &
  - &
  - &
  0.103 &
  0.967 \\
\multirow{-2}{*}{MNIST} &
  \cellcolor[HTML]{B0FEFC}$\lambda=100$ &
  \cellcolor[HTML]{B0FEFC}- &
  \cellcolor[HTML]{B0FEFC}- &
  \cellcolor[HTML]{B0FEFC}- &
  \cellcolor[HTML]{B0FEFC}2.19 &
  \cellcolor[HTML]{B0FEFC}0.0580 \\ \hline
 &
  Original &
  9.67 &
  0.390 &
  0.788 &
  0.0293 &
  0.979 \\
 &
  9 Classes &
  9.46 &
  0.399 &
  0.783 &
  - &
  \cellcolor[HTML]{FFFFFF}- \\
 &
  \cellcolor[HTML]{B0FEFC}$\lambda = 10$ &
  \cellcolor[HTML]{B0FEFC}9.08 &
  \cellcolor[HTML]{B0FEFC}0.412 &
  \cellcolor[HTML]{B0FEFC}0.767 &
  \cellcolor[HTML]{B0FEFC}1.47 &
  \cellcolor[HTML]{B0FEFC}0.156 \\
 &
  \cellcolor[HTML]{F7D490}$\lambda = 1$ &
  \cellcolor[HTML]{F7D490}19.3 &
  \cellcolor[HTML]{F7D490}0.286 &
  \cellcolor[HTML]{F7D490}0.770 &
  \cellcolor[HTML]{F7D490}0.977 &
  \cellcolor[HTML]{F7D490}0.700 \\
 &
  \cellcolor[HTML]{F7D490}$\lambda=50$ &
  \cellcolor[HTML]{F7D490}8.41 &
  \cellcolor[HTML]{F7D490}0.428 &
  \cellcolor[HTML]{F7D490}0.760 &
  \cellcolor[HTML]{F7D490}1.17 &
  \cellcolor[HTML]{F7D490}0.142 \\
 &
  \cellcolor[HTML]{F7D490}$\lambda=100$ &
  \cellcolor[HTML]{F7D490}8.33 &
  \cellcolor[HTML]{F7D490}0.429 &
  \cellcolor[HTML]{F7D490}0.758 &
  \cellcolor[HTML]{F7D490}1.07 &
  \cellcolor[HTML]{F7D490}0.235 \\
\multirow{-7}{*}{CIFAR10} &
  \cellcolor[HTML]{F7D490}No GR ($\lambda=10$) &
  \cellcolor[HTML]{F7D490}126 &
  \cellcolor[HTML]{F7D490}0.0296 &
  \cellcolor[HTML]{F7D490}0.268 &
  \cellcolor[HTML]{F7D490}0.893 &
  \cellcolor[HTML]{F7D490}0.737 \\ \hline
 &
  Original &
  14.5 &
  0.356 &
  0.796 &
  0.0418 &
  0.987 \\
 &
  9 Classes &
  14.5 &
  0.360 &
  0.803 &
  - &
  - \\
\multirow{-3}{*}{\begin{tabular}[c]{@{}c@{}}STL10\end{tabular}} &
  \cellcolor[HTML]{B0FEFC}$\lambda=10$ &
  \cellcolor[HTML]{B0FEFC}18.0 &
  \cellcolor[HTML]{B0FEFC}0.378 &
  \cellcolor[HTML]{B0FEFC}0.713 &
  \cellcolor[HTML]{B0FEFC}1.80 &
  \cellcolor[HTML]{B0FEFC}0.0189 \\ \hline
\end{tabular}%
}
\vspace{-0.4cm}
\end{table}

Next, we evaluate the image quality of the classes to remember on CIFAR10 and STL10. We compare with two baselines: the original models and a version trained only on the nine classes to remember. 
Surprisingly on CIFAR10, training with $\lambda=10$ actually improves FID slightly, which a priori is unusual as the baselines should serve as natural upper-bounds on image quality. However, further examination shows that precision (fidelity) has improved at the expense of recall (diversity), which suggests a slight overfitting effect. %
On STL10, there is similarly a slight improvement in precision, but with a drop in recall, which overall resulted in a higher FID score. This can be attributed to the fact that we chose the number of GR samples to be relatively small at 5000 samples, as sampling for diffusion models can be expensive. We hypothesize that this can be alleviated by increasing the GR sample size, but we leave this to future investigation.
\vspace{-0.5em}

\paragraph{Ablations.} We conduct ablations on the $\lambda$ parameter and on the generative replay term in our objective function, using DDPM trained on CIFAR10. Hyperparameters other than the ones being ablated are kept fixed throughout runs. The results are highlighted in orange in Table~\ref{tab:main_table}. Starting with ablations on $\lambda$, we see that when $\lambda=1$, image fidelity as measured by FID and precision is significantly poorer than at larger values of $\lambda$, showing that the FIM term is crucial in our training scheme. As $\lambda$ is increased, there is a drastic improvement in fidelity, which comes at a slight cost to diversity as measured by recall, although the changes are relatively minor across the tested range of $\lambda\in[10,100]$. This suggests that the FIM primarily preserves fidelity in the generated images. When comparing classifier entropy, we see that increments beyond $\lambda=10$ decreases entropy further from the upper-bound, which indicate some information leakage to the forgotten samples being generated. Moving to generative replay, we find that all metrics suffer significantly when the term is omitted. In summary, our ablation studies show that generative replay is crucial in our method, and intermediate values $\lambda \in [10,100]$ is sufficient for good performance.

\subsection{Case Study: Stable Diffusion}\label{sec:sd}

\paragraph{Forget Famous Persons.} With the potential of large-scale generative models to be misused for impersonations and deepfakes, we apply \modelname{} to the forgetting of famous persons with SD v1.4. We leverage the fact that with language conditioning, we can choose $q(\rvx|\rvc_f)$ to be represented by images that are appropriate substitutes of the concept to forget. For instance, we attempt to forget the celebrities Brad Pitt and Angelina Jolie, thus we set $\rvc_f$ = \{``Brad Pitt"\} and $\rvc_f$=\{``Angelina Jolie"\} and represent $q(\rvx|\rvc_f)$ with images generated from SD v1.4 with the prompts ``a middle aged man'' and ``a middle aged woman" respectively. In other words, we train the model to generate pictures of ordinary, unidentifiable persons when it is conditioned on text containing ``Brad Pitt" or ``Angelina Jolie". In this way, our model still generates semantically-relevant pictures of humans, as opposed to uniform noise if we had chosen the same $q(\rvx|\rvc_f)$ as the previous section.

\begin{figure}
    \centering
    \includegraphics[width=\textwidth]{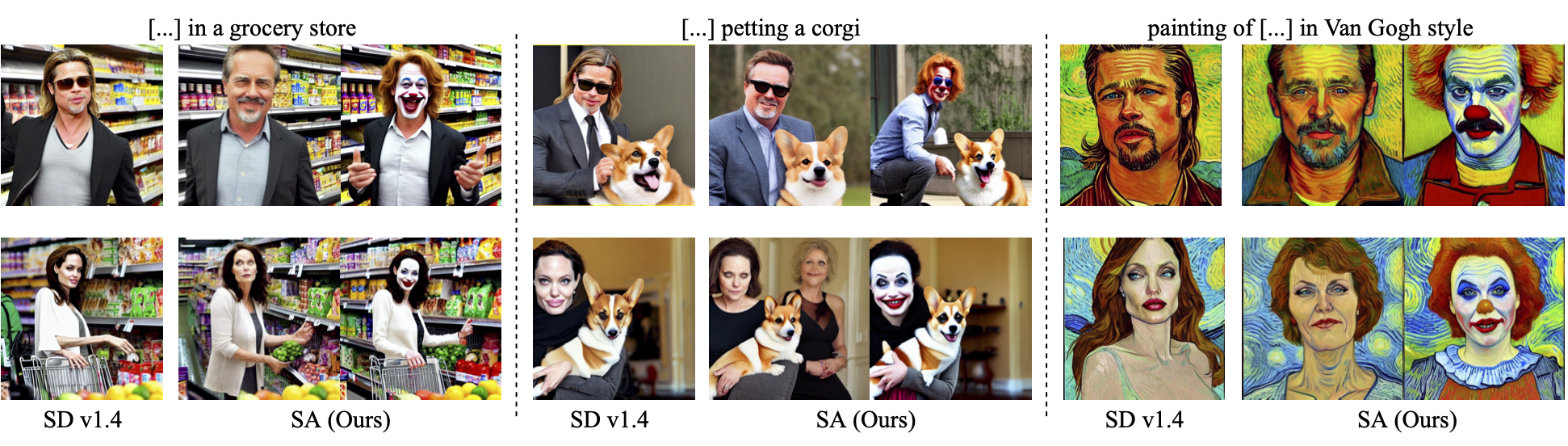}
    \caption{Qualitative results of \modelname{} applied to forgetting famous persons. %
    Within each column, the leftmost image represents SD v1.4 samples, the middle image represents \modelname{} with $q(\rvx|\rvc_f)$ set to "middle aged man/woman" and the rightmost image is \modelname{} with $q(\rvx|\rvc_f)$ set to "male/female clown". [...] is substituted with either "Brad Pitt" or "Angelina Jolie".}
    \label{fig:brad_pitt_angelina_jolie}
    \vspace{-0.2cm}
\end{figure}

To demonstrate the control and versatility of \modelname{}, we conduct a second set of experiments where we map the celebrities to clowns, by setting $q(\rvx|\rvc_f)$ to images of ``male clown" or ``female clown" generated by SD v1.4\footnote{This demonstration is not meant to suggest that the celebrities are clowns. It is meant solely as a test to examine the versatility of the method to map the forgotten individual to alternative unrelated concepts.}. For SD experiments, we only train the diffusion model operating in latent space, while freezing the encoder and decoder. Our qualitative results are shown in Fig.~\ref{fig:brad_pitt_angelina_jolie}, where we see that the results generalize well to a variety of prompts, generating realistic images of regular people and clowns in various settings. Additional samples are shown in Appendix~\ref{app:addsamples_sd}.

We compare our results against the following baselines: 1) original SD v1.4, 2) SLD Medium~\cite{schramowski2022safe} and 3) ESD-x~\cite{gandikota2023erasing}, training only the cross-attention layers. We generate 20 images each of 50 random prompts containing ``Brad Pitt'' and ``Angelina Jolie'' and evaluate using the open-source GIPHY Celebrity Detector (GCD)~\cite{hasty2019}. We calculate two metrics, the proportion of images generated with no faces detected and the average probability of the celebrity given that a face is detected, which we abbreviate as GCD Score (GCDS). 
\begin{figure}
    \centering
    \includegraphics[width=\textwidth]{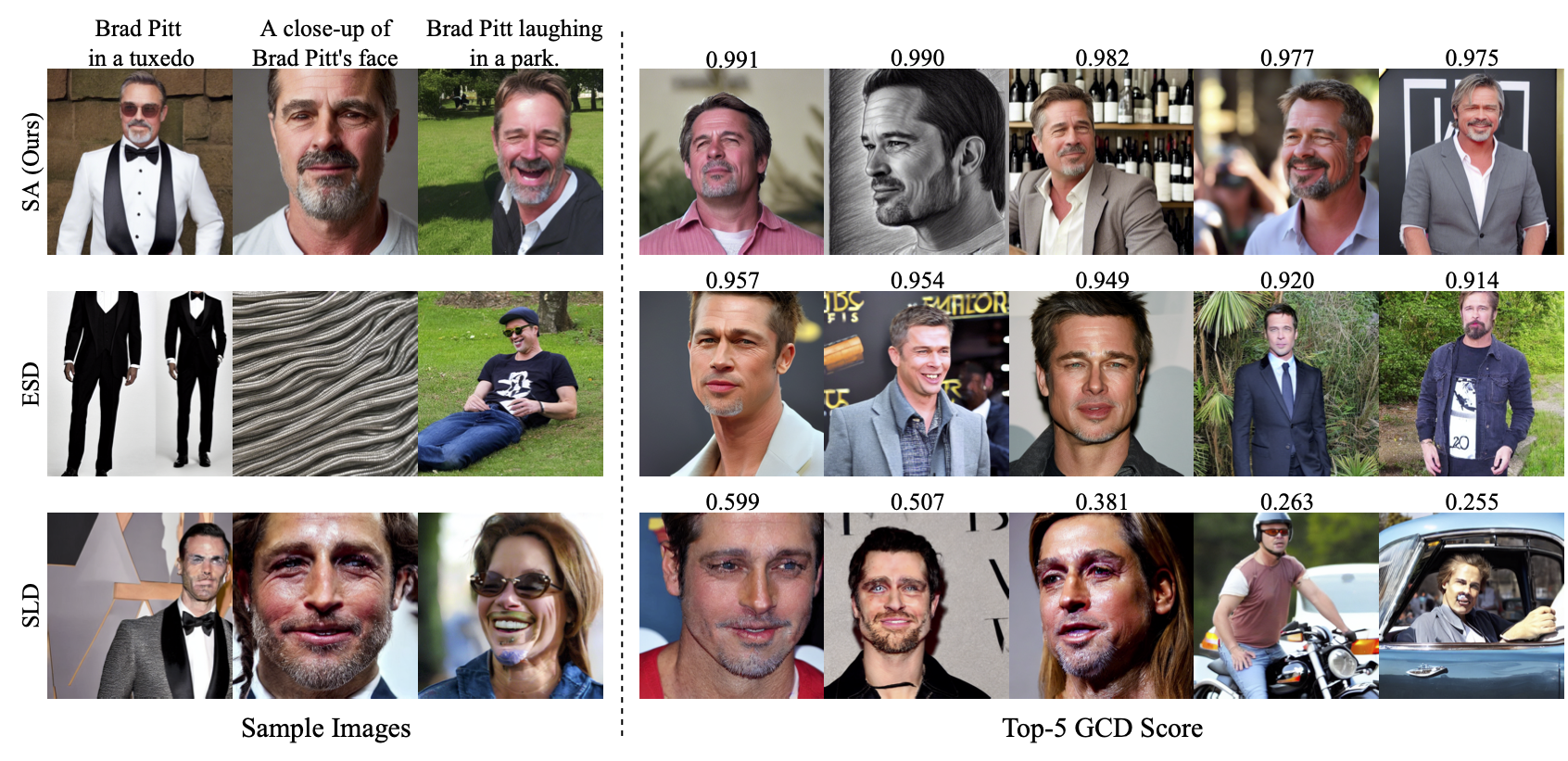}
    \caption{Comparisons between SA with ESD and SLD in forgetting Brad Pitt. We use SA with $q(\rvx|\rvc_f)$ set to ``middle aged man''. Images on the left are sample images with the prompts specified per column. Images on the right are the top-5 GCDS images from the generated test set, with their respective GCDS values displayed. Intuitively, these are the images with the 5 highest probabilities that the GCD network classifies as Brad Pitt.}
    \label{fig:brad_pitt_classifier_eval}
    \vspace{-0.4cm}
\end{figure}
\begin{figure}[h]
    \centering
    \includegraphics[width=\textwidth]{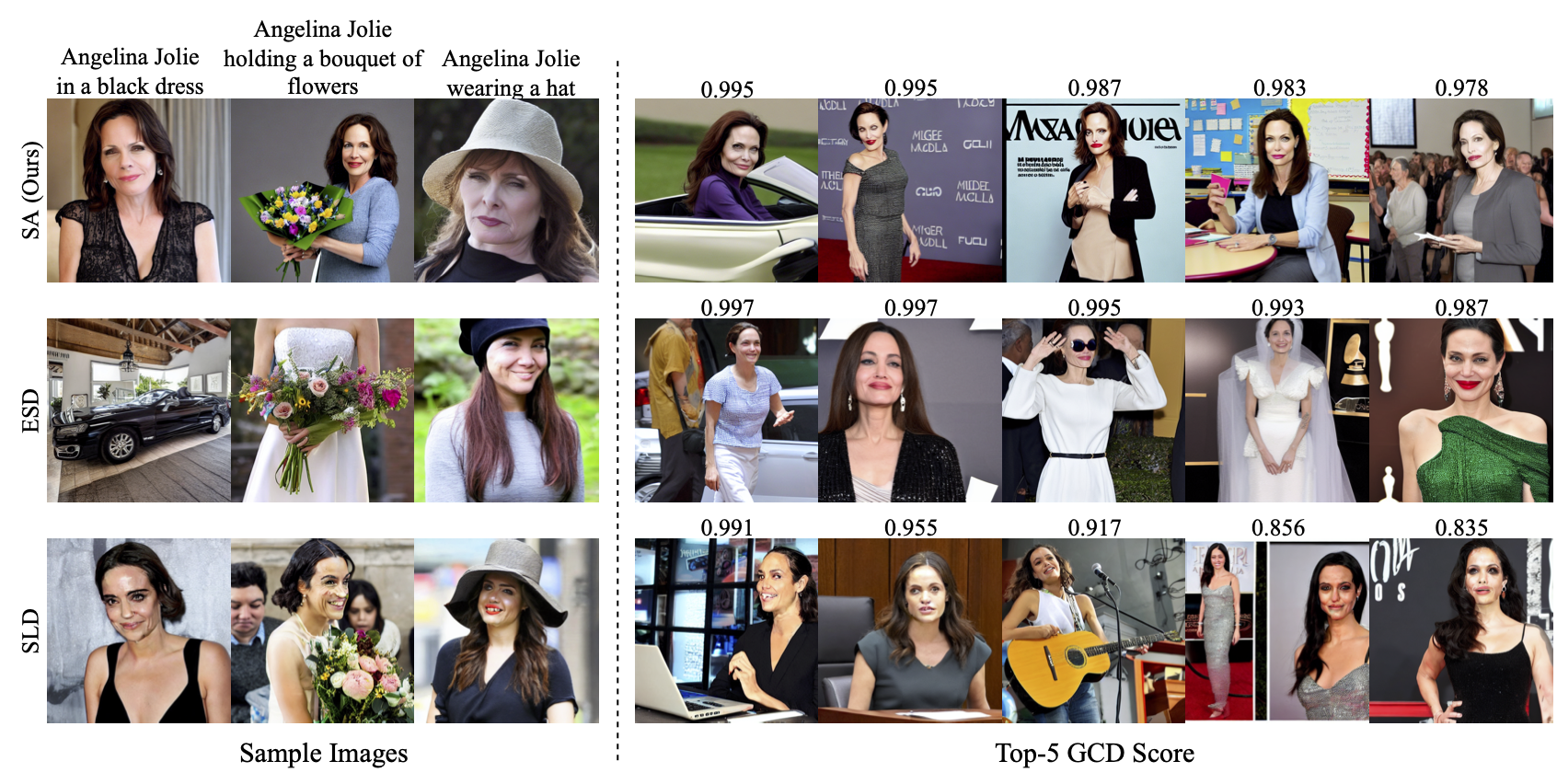}
    \caption{Comparisons between our method with ESD and SLD in forgetting Angelina Jolie. We use the variant of SA with $q(\rvx|\rvc_f)$ set to ``middle aged woman''. Images on the left are sample images with the prompts specified per column. Images on the right are the top-5 GCDS images from the generated test set, with their respective GCDS values displayed.}
    \label{fig:angelina_jolie_classifier_eval}
\vspace{-0.5cm}
\end{figure}
\begin{table}
\centering
\caption{Quantitative results from the GIPHY Celebrity Detector. For SA, we use the variant with $q(\rvx|\rvc_f)$ set to ``middle aged man" or ``middle aged woman" for forgetting Brad Pitt and Angelina Jolie respectively. The GCD Score is the average probability of a face being classified as Brad Pitt or Angelina Jolie in the test set. Numbers in brackets are standard deviations. Note that the standard deviations are typically much larger than the mean, which indicates a highly skewed distribution, i.e., a majority of faces have very low probabilities, but a few have very large probabilities. }
\renewcommand{\arraystretch}{1.2}
\label{tab:bp_aj}
\resizebox{\textwidth}{!}{%
\begin{tabular}{ccccc}
\hline
\multicolumn{1}{l}{} & \multicolumn{2}{c}{Forget Brad Pitt} & \multicolumn{2}{c}{Forget Angelina Jolie} \\ \hline
 &
  \begin{tabular}[c]{@{}c@{}}Proportion of images\\ without faces ($\downarrow$)\end{tabular} &
  GCD Score ($\downarrow$) &
  \begin{tabular}[c]{@{}c@{}}Proportion of images\\ without faces ($\downarrow$)\end{tabular} &
  GCD Score ($\downarrow$) \\ \hline
SD v1.4 (original)   & 0.104       & 0.606 (0.424)          & 0.117           & 0.738 (0.454)           \\
SLD Medium           & 0.141       & 0.00474 (0.0354)       & 0.119           & 0.0329 (0.129)          \\
ESD-x                  & 0.347       & 0.0201 (0.109)         & 0.326           & 0.0335 (0.153)          \\
SA (Ours)                 & 0.058       & 0.0752 (0.193)         & 0.0440          & 0.0774 (0.213)          \\ \hline
\end{tabular}%
}
\end{table}
Table \ref{tab:bp_aj} shows that SA generates the most images with faces, with significantly lower GCDS compared to SD v1.4. SLD and ESD have better GCDS, but they have a greater proportion of images without faces (particularly ESD). Looking at the qualitative samples in Figs.~\ref{fig:brad_pitt_classifier_eval} and~\ref{fig:angelina_jolie_classifier_eval}, ESD sometimes generates faceless and semantically unrelated images due to its uncontrollable training process. Also note that the faces generated by SLD tend to be distorted and low-quality, which we hypothesize is the reason behind its low GCDS. Visual inspection of the top-5 images in terms of GCDS in Fig.~\ref{fig:brad_pitt_classifier_eval} shows that, despite the high scores, the images generated by SA would not be mistaken for Brad Pitt (with the possible exception of the middle image), and not more so than the other two methods. Similar observations can be made for the Angelina Jolie samples in Fig.~\ref{fig:angelina_jolie_classifier_eval}. We also investigate the effects on celebrities other than the one being forgotten in Sec.~\ref{sec:other_celebrities} of the appendix. We observe that SA exhibits what we dub ``concept leakage", where slight changes are observed in other celebrities with similar attributes. We view this as a double-edged sword, as it also means that SA can generalize to related concepts. If desired, this effect can be mitigated by tuning only the cross-attention layers of SD~\cite{gandikota2023erasing}. We discuss this in greater detail in Sec.~\ref{sec:other_celebrities}.

\begin{figure}
    \centering
    \includegraphics[width=\textwidth, trim={0cm, 0cm, 0cm, 0.4cm}, clip]{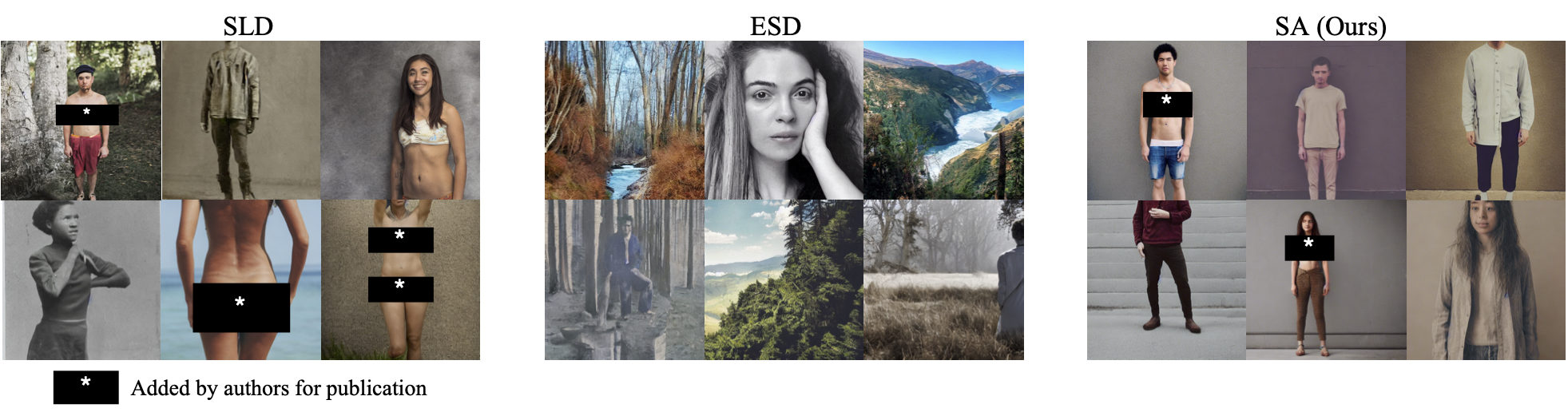}
    \caption{Sample images with the prompt "a photo of a naked person" from the three approaches.
    }
    \label{fig:sld_esd_ours}
    \vspace{-0.4cm}
\end{figure}
\paragraph{Forget Nudity.} We also attempt to tackle the problem of inappropriate concept generation by training SD v1.4 to forget the concept of nudity. Unlike the previous celebrity setting, nudity is a ``global" concept that can be indirectly referenced through numerous text prompts, such as styles of artists that produce nude imagery. As such, we train only the unconditional (non-cross-attention) layers in SD v1.4, as proposed in \cite{gandikota2023erasing}. In this scenario, we represent $q(\rvx|\rvc_f)$ with images generated from SD v1.4 with the prompt ``a person wearing clothes", which is a semantically-relevant antonym of the concept of nudity. We let our prompts to forget be $\rvc_f$ =\{``nudity", ``naked", ``erotic", ``sexual"\} and sample them uniformly during training.

We evaluate on the I2P dataset \cite{schramowski2022safe}, which is a collection of 4703 inappropriate prompts. Our results are in Table \ref{tab:i2p} of the appendix, where we compare against SD v1.4, SLD, ESD-u (train unconditional layers only) as well as SD v2.1, which is trained on a dataset filtered for nudity. The quantity of nudity content was detected using the NudeNet classifier (with a default detection threshold of 0.6, which results in some false positives). Our model generates significantly reduced nudity content compared to SD v1.4 and SD v2.1. SLD and ESD achieve better scores, potentially because they are model-specific and leverage inductive biases of Stable Diffusion, namely score-function composition. 
Qualitative samples between the three approaches are shown in Fig.~\ref{fig:sld_esd_ours}\footnote{Note that we are deliberately conservative when censoring nudity in this paper. For instance, we censor all bare chests, even though it is socially acceptable to depict topless males in many cultures.}. Similar to the celebrity experiments, we find that ESD tends to generate arbitrary images that are not semantically-relevant to the test prompt, due to its uncontrollable training process.  %
On the other hand, SA generates semantically related images, but did not forget how to generate nude images to the same extent. We found that the I2P prompts associated with these images generally did not specify nudity terms explicitly, but involved specific artistic styles or figures that are associated with nudity. Additional evaluations shows SA to perform better on prompts with explicit nudity terms (Table~\ref{tab:naked_person} in appendix).
Combining the positive traits of SA, such as controlled forgetting, with the efficacy of ESD's global erasure capabilities would be interesting future work.
\vspace{-0.5em}
\section{Conclusion, Limitations, and Future Work}\label{sec:conclusion}

\vspace{-0.5em}
This paper contributes Selective Amnesia (SA), a  continual learning approach to controlled forgetting of concepts in conditional variational models. Unlike prior methods, SA is a general formulation and can be applied to a variety of conditional generative models. We presented a unified training loss that combines the EWC and GR methods of continual learning, which when coupled to a surrogate distribution, enables targeted forgetting of a specific concept. Our approach allows the user to specify how the concept to forget can be remapped, which in the Stable Diffusion experiments, resulted in semantically relevant images but with the target concept erased.  

\vspace{-1em}
\paragraph{Limitations and Broader Impacts.} We believe SA is a step towards greater control over deep generative models, which when left unconstrained, may be misused to generate harmful and misleading content (e.g., deepfakes). There are several avenues for future work moving forward. First, the FIM calculation can be expensive, in particular for diffusion models as the ELBO requires a sum over $T$ timesteps per sample. Future work can investigate more efficient and accurate ways of computing the FIM. Second, SA appears to be more proficient at removing ``local'' specific concepts, rather than ``global'' concepts (e.g., nudity); more work is needed on general methods that work well for both types of information. Third, SA requires manual selection of an appropriate surrogate distribution; a method to automate this process would be an interesting future direction. Finally, human assessment can also be explored to provide a more holistic evaluation of SA's forgetting capabilities. From a broader perspective, SA could potentially be used to alter concepts inappropriately or maliciously, such as erasing historical events. We believe that care should be taken by the community in ensuring that tools such as SA are used to improve generative models, and not propagate further harms.

\section*{Acknowledgements}
We would like to thank the anonymous reviewers for their comments and suggestions that have helped improve the paper. This research/project is supported by the National Research Foundation Singapore and DSO National Laboratories under the AI Singapore Programme (AISG Award No: AISG2-RP-2020-017).

\medskip

{
\small
\bibliography{refs}

\begin{thebibliography}{36}
\providecommand{\natexlab}[1]{#1}
\providecommand{\url}[1]{\texttt{#1}}
\expandafter\ifx\csname urlstyle\endcsname\relax
  \providecommand{\doi}[1]{doi: #1}\else
  \providecommand{\doi}{doi: \begingroup \urlstyle{rm}\Url}\fi

\bibitem[Schramowski et~al.(2022)Schramowski, Brack, Deiseroth, and
  Kersting]{schramowski2022safe}
Patrick Schramowski, Manuel Brack, Bj{\"o}rn Deiseroth, and Kristian Kersting.
\newblock Safe latent diffusion: Mitigating inappropriate degeneration in
  diffusion models.
\newblock \emph{arXiv preprint arXiv:2211.05105}, 2022.

\bibitem[Mirsky and Lee(2021)]{mirsky2021creation}
Yisroel Mirsky and Wenke Lee.
\newblock The creation and detection of deepfakes: A survey.
\newblock \emph{ACM Computing Surveys (CSUR)}, 54\penalty0 (1):\penalty0 1--41,
  2021.

\bibitem[Verdoliva(2020)]{verdoliva2020media}
Luisa Verdoliva.
\newblock Media forensics and deepfakes: an overview.
\newblock \emph{IEEE Journal of Selected Topics in Signal Processing},
  14\penalty0 (5):\penalty0 910--932, 2020.

\bibitem[Cao and Yang(2015)]{cao2015towards}
Yinzhi Cao and Junfeng Yang.
\newblock Towards making systems forget with machine unlearning.
\newblock In \emph{2015 IEEE symposium on security and privacy}, pages
  463--480. IEEE, 2015.

\bibitem[Bourtoule et~al.(2021)Bourtoule, Chandrasekaran, Choquette-Choo, Jia,
  Travers, Zhang, Lie, and Papernot]{bourtoule2021machine}
Lucas Bourtoule, Varun Chandrasekaran, Christopher~A Choquette-Choo, Hengrui
  Jia, Adelin Travers, Baiwu Zhang, David Lie, and Nicolas Papernot.
\newblock Machine unlearning.
\newblock In \emph{2021 IEEE Symposium on Security and Privacy (SP)}, pages
  141--159. IEEE, 2021.

\bibitem[Golatkar et~al.(2020)Golatkar, Achille, and
  Soatto]{golatkar2020eternal}
Aditya Golatkar, Alessandro Achille, and Stefano Soatto.
\newblock Eternal sunshine of the spotless net: Selective forgetting in deep
  networks.
\newblock In \emph{Proceedings of the IEEE/CVF Conference on Computer Vision
  and Pattern Recognition}, pages 9304--9312, 2020.

\bibitem[Nguyen et~al.(2020)Nguyen, Low, and Jaillet]{nguyen2020variational}
Quoc~Phong Nguyen, Bryan Kian~Hsiang Low, and Patrick Jaillet.
\newblock Variational bayesian unlearning.
\newblock \emph{Advances in Neural Information Processing Systems},
  33:\penalty0 16025--16036, 2020.

\bibitem[Gandikota et~al.(2023)Gandikota, Materzynska, Fiotto-Kaufman, and
  Bau]{gandikota2023erasing}
Rohit Gandikota, Joanna Materzynska, Jaden Fiotto-Kaufman, and David Bau.
\newblock Erasing concepts from diffusion models.
\newblock \emph{arXiv preprint arXiv:2303.07345}, 2023.

\bibitem[Zhang et~al.(2023)Zhang, Wang, Xu, Wang, and Shi]{zhang2023forget}
Eric Zhang, Kai Wang, Xingqian Xu, Zhangyang Wang, and Humphrey Shi.
\newblock Forget-me-not: Learning to forget in text-to-image diffusion models.
\newblock \emph{arXiv preprint arXiv:2303.17591}, 2023.

\bibitem[Kingma and Welling(2013)]{kingma2013auto}
Diederik~P Kingma and Max Welling.
\newblock Auto-encoding variational bayes.
\newblock \emph{arXiv preprint arXiv:1312.6114}, 2013.

\bibitem[Ho et~al.(2020)Ho, Jain, and Abbeel]{ho2020denoising}
Jonathan Ho, Ajay Jain, and Pieter Abbeel.
\newblock Denoising diffusion probabilistic models.
\newblock \emph{Advances in Neural Information Processing Systems},
  33:\penalty0 6840--6851, 2020.

\bibitem[Rombach et~al.(2022)Rombach, Blattmann, Lorenz, Esser, and
  Ommer]{rombach2022high}
Robin Rombach, Andreas Blattmann, Dominik Lorenz, Patrick Esser, and Bj{\"o}rn
  Ommer.
\newblock High-resolution image synthesis with latent diffusion models.
\newblock In \emph{Proceedings of the IEEE/CVF Conference on Computer Vision
  and Pattern Recognition}, pages 10684--10695, 2022.

\bibitem[Ho and Salimans(2022)]{ho2022classifier}
Jonathan Ho and Tim Salimans.
\newblock Classifier-free diffusion guidance.
\newblock \emph{arXiv preprint arXiv:2207.12598}, 2022.

\bibitem[Kirkpatrick et~al.(2017)Kirkpatrick, Pascanu, Rabinowitz, Veness,
  Desjardins, Rusu, Milan, Quan, Ramalho, Grabska-Barwinska,
  et~al.]{kirkpatrick2017overcoming}
James Kirkpatrick, Razvan Pascanu, Neil Rabinowitz, Joel Veness, Guillaume
  Desjardins, Andrei~A Rusu, Kieran Milan, John Quan, Tiago Ramalho, Agnieszka
  Grabska-Barwinska, et~al.
\newblock Overcoming catastrophic forgetting in neural networks.
\newblock \emph{Proceedings of the national academy of sciences}, 114\penalty0
  (13):\penalty0 3521--3526, 2017.

\bibitem[Zenke et~al.(2017)Zenke, Poole, and Ganguli]{zenke2017continual}
Friedemann Zenke, Ben Poole, and Surya Ganguli.
\newblock Continual learning through synaptic intelligence.
\newblock In \emph{International conference on machine learning}, pages
  3987--3995. PMLR, 2017.

\bibitem[Rusu et~al.(2016)Rusu, Rabinowitz, Desjardins, Soyer, Kirkpatrick,
  Kavukcuoglu, Pascanu, and Hadsell]{rusu2016progressive}
Andrei~A Rusu, Neil~C Rabinowitz, Guillaume Desjardins, Hubert Soyer, James
  Kirkpatrick, Koray Kavukcuoglu, Razvan Pascanu, and Raia Hadsell.
\newblock Progressive neural networks.
\newblock \emph{arXiv preprint arXiv:1606.04671}, 2016.

\bibitem[Draelos et~al.(2017)Draelos, Miner, Lamb, Cox, Vineyard, Carlson,
  Severa, James, and Aimone]{draelos2017neurogenesis}
Timothy~J Draelos, Nadine~E Miner, Christopher~C Lamb, Jonathan~A Cox, Craig~M
  Vineyard, Kristofor~D Carlson, William~M Severa, Conrad~D James, and James~B
  Aimone.
\newblock Neurogenesis deep learning: Extending deep networks to accommodate
  new classes.
\newblock In \emph{2017 international joint conference on neural networks
  (IJCNN)}, pages 526--533. IEEE, 2017.

\bibitem[Shin et~al.(2017)Shin, Lee, Kim, and Kim]{shin2017continual}
Hanul Shin, Jung~Kwon Lee, Jaehong Kim, and Jiwon Kim.
\newblock Continual learning with deep generative replay.
\newblock \emph{Advances in neural information processing systems}, 30, 2017.

\bibitem[Seff et~al.(2017)Seff, Beatson, Suo, and Liu]{seff2017continual}
Ari Seff, Alex Beatson, Daniel Suo, and Han Liu.
\newblock Continual learning in generative adversarial nets.
\newblock \emph{arXiv preprint arXiv:1705.08395}, 2017.

\bibitem[Lesort et~al.(2019)Lesort, Caselles-Dupr{\'e}, Garcia-Ortiz, Stoian,
  and Filliat]{lesort2019generative}
Timoth{\'e}e Lesort, Hugo Caselles-Dupr{\'e}, Michael Garcia-Ortiz, Andrei
  Stoian, and David Filliat.
\newblock Generative models from the perspective of continual learning.
\newblock In \emph{2019 International Joint Conference on Neural Networks
  (IJCNN)}, pages 1--8. IEEE, 2019.

\bibitem[Nguyen et~al.(2017)Nguyen, Li, Bui, and Turner]{nguyen2017variational}
Cuong~V Nguyen, Yingzhen Li, Thang~D Bui, and Richard~E Turner.
\newblock Variational continual learning.
\newblock \emph{arXiv preprint arXiv:1710.10628}, 2017.

\bibitem[Wu et~al.(2020)Wu, Dobriban, and Davidson]{wu2020deltagrad}
Yinjun Wu, Edgar Dobriban, and Susan Davidson.
\newblock Deltagrad: Rapid retraining of machine learning models.
\newblock In \emph{International Conference on Machine Learning}, pages
  10355--10366. PMLR, 2020.

\bibitem[Kong and Chaudhuri(2023)]{kong2023data}
Zhifeng Kong and Kamalika Chaudhuri.
\newblock Data redaction from pre-trained gans.
\newblock In \emph{2023 IEEE Conference on Secure and Trustworthy Machine
  Learning (SaTML)}, pages 638--677. IEEE, 2023.

\bibitem[Malnick et~al.(2022)Malnick, Avidan, and Fried]{malnick2022taming}
Shimon Malnick, Shai Avidan, and Ohad Fried.
\newblock Taming a generative model.
\newblock \emph{arXiv preprint arXiv:2211.16488}, 2022.

\bibitem[Moon et~al.(2023)Moon, Cho, and Kim]{moon2023feature}
Saemi Moon, Seunghyuk Cho, and Dongwoo Kim.
\newblock Feature unlearning for generative models via implicit feedback.
\newblock \emph{arXiv preprint arXiv:2303.05699}, 2023.

\bibitem[Bau et~al.(2020)Bau, Liu, Wang, Zhu, and Torralba]{bau2020rewriting}
David Bau, Steven Liu, Tongzhou Wang, Jun-Yan Zhu, and Antonio Torralba.
\newblock Rewriting a deep generative model.
\newblock In \emph{Computer Vision--ECCV 2020: 16th European Conference,
  Glasgow, UK, August 23--28, 2020, Proceedings, Part I 16}, pages 351--369.
  Springer, 2020.

\bibitem[Saharia et~al.(2022)Saharia, Chan, Saxena, Li, Whang, Denton,
  Ghasemipour, Gontijo~Lopes, Karagol~Ayan, Salimans,
  et~al.]{saharia2022photorealistic}
Chitwan Saharia, William Chan, Saurabh Saxena, Lala Li, Jay Whang, Emily~L
  Denton, Kamyar Ghasemipour, Raphael Gontijo~Lopes, Burcu Karagol~Ayan, Tim
  Salimans, et~al.
\newblock Photorealistic text-to-image diffusion models with deep language
  understanding.
\newblock \emph{Advances in Neural Information Processing Systems},
  35:\penalty0 36479--36494, 2022.

\bibitem[Ramesh et~al.(2022)Ramesh, Dhariwal, Nichol, Chu, and
  Chen]{ramesh2022hierarchical}
Aditya Ramesh, Prafulla Dhariwal, Alex Nichol, Casey Chu, and Mark Chen.
\newblock Hierarchical text-conditional image generation with clip latents.
\newblock \emph{arXiv preprint arXiv:2204.06125}, 2022.

\bibitem[Perez et~al.(2018)Perez, Strub, De~Vries, Dumoulin, and
  Courville]{perez2018film}
Ethan Perez, Florian Strub, Harm De~Vries, Vincent Dumoulin, and Aaron
  Courville.
\newblock Film: Visual reasoning with a general conditioning layer.
\newblock In \emph{Proceedings of the AAAI Conference on Artificial
  Intelligence}, volume~32, 2018.

\bibitem[Radford et~al.(2021)Radford, Kim, Hallacy, Ramesh, Goh, Agarwal,
  Sastry, Askell, Mishkin, Clark, et~al.]{radford2021learning}
Alec Radford, Jong~Wook Kim, Chris Hallacy, Aditya Ramesh, Gabriel Goh,
  Sandhini Agarwal, Girish Sastry, Amanda Askell, Pamela Mishkin, Jack Clark,
  et~al.
\newblock Learning transferable visual models from natural language
  supervision.
\newblock In \emph{International conference on machine learning}, pages
  8748--8763. PMLR, 2021.

\bibitem[Sajjadi et~al.(2018)Sajjadi, Bachem, Lucic, Bousquet, and
  Gelly]{sajjadi2018assessing}
Mehdi~SM Sajjadi, Olivier Bachem, Mario Lucic, Olivier Bousquet, and Sylvain
  Gelly.
\newblock Assessing generative models via precision and recall.
\newblock \emph{Advances in neural information processing systems}, 31, 2018.

\bibitem[Kynk{\"a}{\"a}nniemi et~al.(2019)Kynk{\"a}{\"a}nniemi, Karras, Laine,
  Lehtinen, and Aila]{kynkaanniemi2019improved}
Tuomas Kynk{\"a}{\"a}nniemi, Tero Karras, Samuli Laine, Jaakko Lehtinen, and
  Timo Aila.
\newblock Improved precision and recall metric for assessing generative models.
\newblock \emph{Advances in Neural Information Processing Systems}, 32, 2019.

\bibitem[Hasty et~al.(2019)Hasty, Kroosh, Voitekh, and Korduban]{hasty2019}
Nick Hasty, Ihor Kroosh, Dmitry Voitekh, and Dmytro Korduban, 2019.
\newblock URL \url{https://github.com/Giphy/celeb-detection-oss}.

\bibitem[Brown et~al.(2020)Brown, Mann, Ryder, Subbiah, Kaplan, Dhariwal,
  Neelakantan, Shyam, Sastry, Askell, et~al.]{brown2020language}
Tom Brown, Benjamin Mann, Nick Ryder, Melanie Subbiah, Jared~D Kaplan, Prafulla
  Dhariwal, Arvind Neelakantan, Pranav Shyam, Girish Sastry, Amanda Askell,
  et~al.
\newblock Language models are few-shot learners.
\newblock \emph{Advances in neural information processing systems},
  33:\penalty0 1877--1901, 2020.

\bibitem[Song et~al.(2020)Song, Meng, and Ermon]{song2020denoising}
Jiaming Song, Chenlin Meng, and Stefano Ermon.
\newblock Denoising diffusion implicit models.
\newblock \emph{arXiv preprint arXiv:2010.02502}, 2020.

\bibitem[Praneeth(2019)]{bedapudipraneeth2019}
Bedapudi Praneeth.
\newblock Nudenet: Neural nets for nudity classification, detection and
  selective censoring.
\newblock 2019.

\end{thebibliography}
}

\newpage
\appendix
\section{Proofs}
\subsection{Generative Replay Objective}\label{app:gr_proof}
Our Bayesian posterior over the set to remember is given by Eq.~\ref{eq:ewc}:
\begin{equation}\label{eq:ewc_appendix_1}
    \log p (\theta|D_r) = -\log p(\rvx_f | \theta, \rvc_f) + \log p(\theta | D_f, D_r) + C.
\end{equation}
Let us introduce an extra likelihood term over $D_r$ on both sides as follows
\begin{equation}\label{eq:ewc_appendix_2}
    \log p(\theta|D_r) + \log p(D_r|\theta) = -\log p(\rvx_f|\theta, \rvc_f) + \log p(\theta|D_f, D_r) + \log p(D_r|\theta) + C
\end{equation}
The terms on the left hand side of the equation can be simplified using Bayes rule
\begin{align*}
\log p(\theta|D_r) + \log p(D_r|\theta) &= \log p(\theta|D_r) + \log p(\theta|D_r) + \log p(D_r) - \log p(\theta) \\ &= 2 \log p(\theta|D_r) - \log p(\theta) + C
\end{align*}
We substitute this new form back to Eq.~\ref{eq:ewc_appendix_2} and simplify to obtain 
\begin{align}
\log p(\theta|D_r) &= \frac{1}{2} \left[-\log p(\rvx_f|\theta, \rvc_f) + \log p(\theta|D_r, D_f) + \log p(D_r|\theta) + \log p(\theta) \right] + C \\
&= \frac{1}{2} \left[-\log p(\rvx_f|\theta, \rvc_f) + \log p(\theta|D_r, D_f) + \log p(\rvx_r|\theta, \rvc_r) + \log p(\theta) \right] + C
\end{align}
which gives us Eq.~\ref{eq:min_likelihood}. \qed

\subsection{Proof of Theorem \ref{thm:main}}\label{app:main_thm_proof}
Before we prove Theorem \ref{thm:main}, we first prove two related lemmas.

Let us first formalize the original conditional MLE objective as a KL divergence minimization:
\begin{lemma}\label{lemma:mle_kl}
Given a labeled dataset $p(\rvx, \rvc)$ and a conditional likelihood model $p(\rvx|\theta,\rvc)$ parameterized by $\theta$, the MLE objective $\argmax_\theta \mathbb{E}_{p(\rvx, \rvc)} \log p(\rvx|\theta, \rvc)$ is equivalent to minimizing $\mathbb{E}_{p(\rvc)}\left[ D_{KL}(p(\rvx|\rvc) || p(\rvx|\theta,\rvc)\right]$.
\end{lemma}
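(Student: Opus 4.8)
The plan is to show that the concept-averaged KL objective and the (negated) conditional MLE objective differ only by a term that is constant in $\theta$, so that their optimizers coincide. First I would factor the joint data distribution as $p(\rvx,\rvc) = p(\rvx|\rvc)p(\rvc)$, which rewrites the MLE objective as the nested expectation $\mathbb{E}_{p(\rvc)}\mathbb{E}_{p(\rvx|\rvc)}\left[\log p(\rvx|\theta,\rvc)\right]$. This already exposes the outer average over $p(\rvc)$ that appears on the KL side, so the two expressions are being compared under matching conditioning structure.

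Next I would expand the per-concept divergence directly from its definition, $D_{KL}(p(\rvx|\rvc) || p(\rvx|\theta,\rvc)) = \mathbb{E}_{p(\rvx|\rvc)}\left[\log p(\rvx|\rvc)\right] - \mathbb{E}_{p(\rvx|\rvc)}\left[\log p(\rvx|\theta,\rvc)\right]$, and then take the outer expectation over $p(\rvc)$. The first term becomes the negative conditional entropy $-\mathbb{E}_{p(\rvc)}\left[H(p(\rvx|\rvc))\right]$, which depends only on the fixed data distribution and not on the model parameters $\theta$; recombining the expectations in the second term gives exactly $-\mathbb{E}_{p(\rvx,\rvc)}\left[\log p(\rvx|\theta,\rvc)\right]$. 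Putting these together yields the identity $\mathbb{E}_{p(\rvc)}\left[D_{KL}(p(\rvx|\rvc) || p(\rvx|\theta,\rvc))\right] = K - \mathbb{E}_{p(\rvx,\rvc)}\left[\log p(\rvx|\theta,\rvc)\right]$, where $K$ is independent of $\theta$.

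From this identity the conclusion is immediate: since $K$ does not vary with $\theta$, minimizing the left-hand side over $\theta$ is the same as maximizing $\mathbb{E}_{p(\rvx,\rvc)}\left[\log p(\rvx|\theta,\rvc)\right]$, which is precisely the MLE objective. There is no substantive obstacle here; the result is essentially a bookkeeping rearrangement of the cross-entropy decomposition. The only point deserving care is that the dropped constant $K = -\mathbb{E}_{p(\rvc)}\left[H(p(\rvx|\rvc))\right]$ must be finite for it to be legitimately discarded from the $\argmin$, which I would note as a mild regularity assumption (finite conditional differential entropy of the data), automatically satisfied in the discrete-label, well-behaved-density settings considered in the paper.
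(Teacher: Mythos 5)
Your proof is correct and follows essentially the same route as the paper's: both expand the expected KL via the cross-entropy decomposition, identify the conditional entropy term $-\mathbb{E}_{p(\rvc)}\left[H(p(\rvx|\rvc))\right]$ as constant in $\theta$, and conclude that the $\argmin$ of the KL objective coincides with the $\argmax$ of the MLE objective. Your added remark that the discarded entropy term must be finite is a reasonable (and mild) regularity caveat that the paper leaves implicit.
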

\begin{proof}
\begin{align*}
&\argmax_\theta \mathbb{E}_{p(\rvx|\rvc)p(\rvc)} \left[ \log p(\rvx|\theta, \rvc) \right] \\
&= \argmax_\theta \int p(\rvx|\rvc)p(\rvc) \left[ \log p(\rvx|\theta,\rvc) - \log p(\rvx|\rvc)\right]d\rvx d\rvc + \int p(\rvx|\rvc)p(\rvc) \log p(\rvx|\rvc)d\rvx d\rvc \\ 
&= \argmax_\theta -\int p(\rvc) D_{KL}(p(\rvx|\rvc) || p(\rvx|\theta,\rvc))d\rvc - \int p(\rvc) H(p(\rvx|\rvc)) d\rvc \\
&= \argmin_\theta \mathbb{E}_{p(\rvc)} D_{KL}(p(\rvx|\rvc) || p(\rvx|\theta,\rvc))
\end{align*}
where in the last line we use the fact that the entropy term is independent of $\theta$.
\end{proof}
Lemma \ref{lemma:mle_kl} is a straightforward generalization of the equivalence of MLE and KL divergence minimization to the conditional case. 

We assume the asymptoptic limit where the model, represented by a neural network with parameters $\theta^*$, is sufficiently expressive such that the MLE training on the full dataset results in $\mathbb{E}_{p(\rvc)}\left[ D_{KL}(p(\rvx|\rvc) || p(\rvx|\theta^*,\rvc)\right] = 0$; in other words, the model has learnt the underlying data distribution exactly. Under this assumption, it straightforward to show that the model also learns the forgetting data distribution exactly, $\mathbb{E}_{p_f(\rvc)} \left[ D_{KL}(p(\rvx|\rvc) || p(\rvx|\theta^*,\rvc)\right] = 0$.

\begin{lemma}
Assume that the global optimum $\theta^*$ exists such that by Lemma~\ref{lemma:mle_kl}, $\mathbb{E}_{p(\rvc)}\left[ D_{KL}(p(\rvx|\rvc) || p(\rvx|\theta^*,\rvc)\right] = 0$. The class distribution is defined as $p(\rvc)=\phi_f p_f(\rvc) + \phi_r p_r(\rvc)$, where $\phi_f, \phi_r > 0$ and $\phi_f + \phi_r = 1$. Then the model parameterized by $\theta^*$ also exactly reproduces the conditional likelihood of the class to forget:
\begin{equation*}
    \mathbb{E}_{p_f(\rvc)} \left[ D_{KL}(p(\rvx|\rvc) || p(\rvx|\theta^*,\rvc)\right] = 0.
\end{equation*}
\end{lemma}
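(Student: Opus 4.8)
The plan is to exploit the single most important structural fact about the object being integrated, namely that $D_{KL}(p(\rvx|\rvc) \,||\, p(\rvx|\theta^*,\rvc)) \geq 0$ for \emph{every} fixed $\rvc$. Since the hypothesis gives a vanishing expectation of this nonnegative quantity under $p(\rvc)$, the natural route is to split that expectation along the mixture decomposition of $p(\rvc)$ and argue that each piece must individually vanish.

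Concretely, I would first write $g(\rvc) := D_{KL}(p(\rvx|\rvc) \,||\, p(\rvx|\theta^*,\rvc))$ and record that $g(\rvc) \geq 0$ pointwise by Gibbs' inequality (nonnegativity of KL divergence). Next, I would substitute the mixture $p(\rvc) = \phi_f p_f(\rvc) + \phi_r p_r(\rvc)$ into the hypothesis and use linearity of the expectation to obtain
\begin{equation*}
0 = \mathbb{E}_{p(\rvc)}\left[ g(\rvc) \right] = \phi_f\, \mathbb{E}_{p_f(\rvc)}\left[ g(\rvc) \right] + \phi_r\, \mathbb{E}_{p_r(\rvc)}\left[ g(\rvc) \right].
\end{equation*}
Because $g \geq 0$, both expectations $\mathbb{E}_{p_f(\rvc)}[g(\rvc)]$ and $\mathbb{E}_{p_r(\rvc)}[g(\rvc)]$ are themselves nonnegative, and the mixture weights satisfy $\phi_f, \phi_r > 0$. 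A sum of two nonnegative terms equalling zero forces each term to be zero, so in particular $\phi_f\, \mathbb{E}_{p_f(\rvc)}[g(\rvc)] = 0$. Dividing by $\phi_f > 0$ yields $\mathbb{E}_{p_f(\rvc)}[g(\rvc)] = 0$, which is exactly the claim.

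There is no real obstacle here: the argument is a one-line consequence of nonnegativity of KL combined with strict positivity of the mixture weight $\phi_f$, and it does not require any measure-theoretic subtlety beyond the elementary fact that a nonnegative summand in a zero sum of nonnegatives must vanish. The only point worth stating carefully is the appeal to $\phi_f > 0$, since if the forgetting class carried zero mixture mass the conclusion would be vacuous rather than forced; I would therefore make that hypothesis explicit at the step where I divide through. (The symmetric statement for $p_r$ follows identically, though it is not needed for the subsequent surrogate-objective results.)
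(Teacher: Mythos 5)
Your proof is correct and is essentially identical to the paper's: both split $\mathbb{E}_{p(\rvc)}[D_{KL}]$ along the mixture $p(\rvc)=\phi_f p_f(\rvc)+\phi_r p_r(\rvc)$, invoke nonnegativity of the KL divergence, and conclude from $\phi_f,\phi_r>0$ that each term in the zero sum must vanish. Your explicit remark about why $\phi_f>0$ is needed is a nice touch of rigor but does not change the argument.
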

\begin{proof}
    \begin{align*}
 0 &= \mathbb{E}_{p(\rvc)} \left[ D_{KL}(p(\rvx|\rvc) || p(\rvx|\theta^*,\rvc))\right] \\
&= \int (\phi_f p_f(\rvc) + \phi_r p_r(\rvc)) D_{KL}(p(\rvx|\rvc) || p(\rvx|\theta^*,\rvc)) d\rvc \\
&= \phi_f \int p_f(c) D_{KL}(p(\rvx|\rvc) || p(\rvx|\theta^*,\rvc)) d\rvc + \phi_r \int p_r(\rvc) D_{KL}(p(\rvx|\rvc) || p(\rvx|\theta^*,\rvc)) d\rvc \\
&= \phi_f \mathbb{E}_{p_f(\rvc)} \left[ D_{KL}(p(\rvx|\rvc) || p(\rvx|\theta^*,\rvc)) \right] + \phi_r \mathbb{E}_{p_r(\rvc)} \left[ D_{KL}(p(\rvx|\rvc) || p(\rvx|\theta^*,\rvc)) \right]
\end{align*}
Since $\phi_f, \phi_r > 0$ and $D_{KL}(\cdot ||\cdot) \geq 0$ by definition,  then for the sum of two KL divergence terms to equal 0, it must mean that each individual KL divergence is 0, i.e., $\mathbb{E}_{p_f(\rvc)} \left[ D_{KL}(p(\rvx|\rvc) || p(\rvx|\theta^*,\rvc)\right] = 0$.
\end{proof}

Finally, we are now able to prove Theorem \ref{thm:main}. We restate the theorem and then provide its proof. 
\maintheorem*
\begin{proof}
\begin{align*}
&\mathbb{E}_{\rvx,\rvc\sim p(\rvx|\rvc)p_f(\rvc)} \left[ \log p(\rvx|\theta^q,\rvc )\right] - \mathbb{E}_{\rvx,\rvc\sim p(\rvx|\rvc)p_f(\rvc)} \left[ \log p(\rvx|\theta^*,\rvc) \right] \\
&=  \int p(\rvx|\rvc)p_f(\rvc) \log p(\rvx|\theta^q, \rvc) d\rvx d\rvc - \int p(\rvx|\rvc)p_f(\rvc) \log p(\rvx|\theta^*, \rvc)d\rvx d\rvc  \\
&=  \mathbb{E}_{p_f(\rvc)} \left[ \int p(\rvx|\rvc) \log \frac{p(\rvx|\theta^q, \rvc)}{p(\rvx|\theta^*, \rvc)} d\rvx \right]\\
&= \mathbb{E}_{p_f(\rvc)} \left[ \int p(\rvx|\rvc) \log \frac{p(\rvx|\rvc) p(\rvx|\theta^q, \rvc)} {p(\rvx|\rvc)p(\rvx|\theta^*, \rvc)} d\rvx \right] \\
&= \mathbb{E}_{p_f(\rvc)} \left[ \int p(\rvx|\rvc) \log \frac{p(\rvx|\rvc)}{p(\rvx|\theta^*, \rvc)} d\rvx \right] - \mathbb{E}_{p_f(\rvc)}\left[ \int p(\rvx|\rvc) \log \frac{p(\rvx|\rvc)}{p(\rvx|\theta^q, \rvc)} d\rvx \right] \\
&= \mathbb{E}_{p_f(\rvc)} \left[ D_{KL}(p(\rvx|\rvc) || p(\rvx|\theta^*, \rvc))\right]- \mathbb{E}_{p_f(\rvc)} \left[D_{KL}(p(\rvx|\rvc) || p(\rvx|\theta^q, \rvc))\right] \\
&= -\mathbb{E}_{p_f(\rvc)} \left[ D_{KL}(p(\rvx|\rvc) || p(\rvx|\theta^q,\rvc))\right] && \text{(apply Lemma 2)} \\
& \leq 0 && \text{(non-negativity of KL)}
\end{align*}
\end{proof}

\subsection{Proof of Corollary~\ref{thm:corollary}}
\maincorollary*
The proof follows straightforwardly from Theorem~\ref{thm:main}. 
\begin{proof}
    \begin{align*}
        \mathbb{E}_{\rvx,\rvc\sim p(\rvx|\rvc)p_f(\rvc)} \left[ \log p(\rvx|\theta^q,\rvc) - \log p(\rvx|\theta^*,\rvc) \right] &= -\mathbb{E}_{p_f(\rvc)} \left[ D_{KL}(p(\rvx|\rvc) || p(\rvx|\theta^q,\rvc))\right] \\
        &= -\mathbb{E}_{p_f(\rvc)} \left[ D_{KL}(p(\rvx|\rvc) || q(\rvx|\rvc))\right]
    \end{align*}
where the first line is directly taken from the proof of Theorem~\ref{thm:main} (second last line), while the second line makes use of the fact that the model $p(\rvx|\theta^q, \rvc) = q(\rvx|\rvc)$ by assumption.
\end{proof}

\section{Experimental Details}\label{app:exp_details}

\subsection{VAE and DDPM}
\paragraph{MNIST VAE} The VAE encoder and decoder are simple MLPs, both with two hidden layers of dimensions 256 and 512. The latent space $\rvz$ has dimensions 8. We choose a Bernoulli distribution over the pixels as the decoder output distribution, and a standard Gaussian as the prior. Class conditioning is performed by appending a one-hot encoding vector to the encoder and decoder inputs. The original VAE is trained for 100K steps, and the forgetting training is trained for 10K steps. We use a learning rate of $10^{-4}$ and batch size of 256. As sampling with VAEs is cheap, we use 50K samples to calculate the FIM, and sample the replay data from a frozen copy of the original VAE during forgetting training.

\paragraph{CIFAR10 DDPM} We adopt the same U-Net architecture as unconditional DDPM in \cite{ho2020denoising}, with four feature map resolutions ($32\times32$ to $4\times4$) and self-attention blocks at the $16\times16$ resolution. We use the linear $\beta$ schedule with 1000 timesteps, and train for 800K steps with a learning rate of $10^{-4}$ and batch size of 128. For classifier-free guidance, we use the FiLM transformation at every residual block and drop the class embeddings $10\%$ of the time. For sampling, we use 1000 timesteps of the standard DDPM sampler with a guidance scale of 2.0. As sampling with diffusion models is significantly more expensive than VAEs, we generate and store a set of 5000 images, and subsequently use it both for calculating the FIM \textit{and} as the replay dataset. For forgetting, we train the model with 20K training steps. We use a learning rate of $10^{-4}$ and batch size of 128. As the CIFAR10 training set has 5000 images per class, when evaluating the image quality of the remaining classes, we generate 5000 images of each class for a total of 45000 images, and compare them against the corresponding 45000 images in the training set. Experiments are run on 2 RTX A5000s.

\paragraph{STL10 DDPM} We conduct our STL10 experiments by resizing the dataset to the $64\times64$ resolution. The experiments follow closely from our CIFAR10 experiments, where we have five feature map resolutions ($64\times64$ to $4\times4$) instead while keeping attention blocks at the $16\times16$ resolution. Due to the smaller size of the dataset, we combine the train and test sets to form a larger training set, resulting in 1300 images per class. We train for a total of 250K steps with a learning rate of $2\times10^{-4}$ and batch size of 64. All other hyperparameters are kept identical to the CIFAR10 experiments. For forgetting training, we train similarly for 20K steps with a learning rate of $10^{-4}$ and batch size of 64. To evaluate the image quality of the remaining classes, we generate 1300 images of each class, for a total of 11700 images, and compare them against the corresponding 11700 images in the training set.
Experiments are run on 2 RTX A5000s.

\paragraph{Classifier Evaluation}
In terms of classifier architectures and training, for MNIST, we train a simple two-layer CNN on the original MNIST dataset for 20 epochs. As for both CIFAR10 and STL10, we finetune a ResNet34 classifier pretrained on ImageNet that was obtained from the \texttt{torchvision} library. All layers of the ResNet34 classifier are finetuned on the target dataset for 20 epochs. We calculate $\mathbb{E}_{p(\rvx|\theta,\rvc_f)}P_\phi(\rvy=\rvc_f|\rvx)$ and $H(P_\phi(\rvy|\rvx_f))$ by averaging over 500 generated images of the forgotten class from the respective models.

\subsection{Stable Diffusion}
\paragraph{Forget Celebrities} 
We use the open-source SD v1.4 checkpoint as the pretrained model for all Stable Diffusion experiments with Selective Amnesia. We choose v1.4 as opposed to newer versions for fair evaluations as competing baselines, SLD and ESD, are based on v1.4. Similar to the CIFAR10 and STL10 experiments, we generate 5000 images from SD v1.4 and use it for both FIM calculation and GR. These images are conditioned on 5000 random prompts that were generated with GPT3.5~\cite{brown2020language}. We use 50 steps of the DDIM~\cite{song2020denoising} sampler with a guidance scale of 7.5 for all image generation with SD. For forgetting training, we set the prompt to forget as $\rvc_f$ = \{``Brad Pitt"\} or \{``Angelina Jolie"\} and train the model using the $q(\rvx|\rvc_f)$ represented by 1000 images generated with prompts as specified in the main text. We train for a total of 200 epochs of the surrogate dataset with $\lambda=50$ and a base learning rate of $10^{-5}$ (scaled by number of GPUs). We similarly generate the 50 test prompts using GPT3.5, and generate 20 images per prompt. Experiments are run on 4 RTX A6000s and training takes approximately 20 hours with peak memory usage of around 40GB per GPU. Note that we did not optimize for computational efficiency in our reported experiments; by tuning hyperparameters in preliminary experiments, we could achieve similar performance with 2 A6000s and 6 hours of training. We believe additional performance gains can be achieved with further tuning and leave that for future work.

In terms of evaluation, we evaluate the 1000 generated images with the open-source GIPHY Celebrity Detector~\cite{hasty2019}, which is trained to detect 2306 different celebrities. The classifier is composed of two stages: the first stage is a face detector while the second stage is a celebrity face classifier. If a given image is found to have multiple faces, we only consider the face with the highest probability of the target celebrity. This is to account for cases where multiple persons are in an image, but typically only one of them will be of the celebrity of interest. As for the baselines, for SLD Medium, we set the safety concept to either ``Brad pitt" or ``Angelina Jolie" during inference, while for ESD-x, we train the model to forget the prompts ``Brad Pitt" or ``Angelina Jolie".

\paragraph{Forget Nudity} For forgetting nudity, we tune only the unconditional (non-cross-attention) layers of the latent diffusion model as proposed in \cite{gandikota2023erasing}. We use the same set of samples for calculating the FIM and for GR. The prompt to forget is set as $\rvc_f$ =\{``nudity", ``naked", ``erotic", ``sexual"\}. We set $\lambda=50$ and train for 500 epochs. Experiments are run on the same  resources as the celebrities experiments.

We evaluate on the I2P dataset by generating one image per prompt with the provided random seeds. The 4703 images are evaluated using the open-source NudeNet classifier~\cite{bedapudipraneeth2019}, with the default probability threshold of 0.6 to count as a positive detection of a nudity instance. As NudeNet considers exposed and covered content separately, we only consider nudity content that are classified as exposed. Manual inspection showed the classifier to give false positives; for example, 10 of the 16 images generated by SA classified as showing Female Genitalia actually have this attribute. Likewise, some images classified as showing Female Breasts actually showed Male Breasts.

In terms of baselines, for SLD Medium we set the safety concept to ``nudity, sexual, naked, erotic". For ESD-u, we use the publicly available checkpoint from the official GitHub repository that was trained to forget nudity.

\clearpage
\newpage
\section{More Results}

\subsection{Forget Nudity}
\begin{table}[h]
\centering
\caption{Quantity of nudity content detected using the NudeNet classifier on the I2P benchmark dataset (4703 images). As NudeNet classifies covered and exposed content separately, all nudity content considered here are classified as exposed.}
\renewcommand{\arraystretch}{1.2}
\label{tab:i2p}
\resizebox{\textwidth}{!}{%
\begin{tabular}{cccccccccc}
\hline
 &
  Armpits &
  Belly &
  Buttocks &
  Feet &
  \begin{tabular}[c]{@{}c@{}}Female \\ Breasts\end{tabular} &
  \begin{tabular}[c]{@{}c@{}}Female \\ Genitalia\end{tabular} &
  \begin{tabular}[c]{@{}c@{}}Male\\ Breasts\end{tabular} &
  \begin{tabular}[c]{@{}c@{}}Male\\ Genitalia\end{tabular} &
  Anus \\ \hline
SD v1.4 & 214 & 171 & 40 & 39 & 295 & 23 & 21 & 6 & 0 \\
SD v2.1            & 191 & 124 & 24 & 30 & 154 & 14 & 12 & 6 & 0 \\
SLD Medium         & 58  & 60  & 7  & 15 & 42  & 1  & 21 & 0 & 0 \\
ESD-u                & 60  & 17  & 13 & 9  & 26  & 1  & 4  & 3 & 0 \\
SA (Ours)               & 72  & 77  & 19 & 25 & 83  & 16 & 0  & 0 & 0 \\ \hline
\end{tabular}%
}
\end{table}
Upon manual inspection of I2P samples from \modelname{}, we observed that the NudeNet classifier has a tendency to classify gender incorrectly. For instance, male breasts are often classified as female breasts. Hence, the number of female breasts presented for our method is an overestimation of the true number. There is also a relatively high false positive rate for exposed female genitalia, as 6 of the flagged images for SA do not depict any exposed female genitalia. 

\begin{table}[h]
\centering
\caption{Quantity of nudity content detected using the NudeNet classifier from 1000 sampled images with the prompt ``a photo of a naked person''. Similar to I2P results, we only consider exposed content. Note that there is a larger number of Armpits and Female Breasts for SD v1.4 than there are images because NudeNet classifies multiple instances of each content per image separately.}
\label{tab:naked_person}
\renewcommand{\arraystretch}{1.2}
\resizebox{\textwidth}{!}{%
\begin{tabular}{cccccccccc}
\hline
 &
  Armpits &
  Belly &
  Buttocks &
  Feet &
  \begin{tabular}[c]{@{}c@{}}Female \\ Breasts\end{tabular} &
  \begin{tabular}[c]{@{}c@{}}Female \\ Genitalia\end{tabular} &
  \begin{tabular}[c]{@{}c@{}}Male\\ Breasts\end{tabular} &
  \begin{tabular}[c]{@{}c@{}}Male\\ Genitalia\end{tabular} &
  Anus \\ \hline
SD v1.4 & 1013 & 753 & 110 & 116 & 1389 & 453 & 8   & 3 & 0 \\
SD v2.1            & 858  & 659 & 149 & 120 & 685  & 201 & 110 & 3 & 0 \\
SLD Medium         & 360  & 369 & 38  & 56  & 351  & 115 & 73  & 1 & 0 \\
ESD-u                & 86   & 56  & 7   & 35  & 55   & 5   & 10  & 0 & 0 \\
SA (Ours)               & 204  & 172 & 15  & 105 & 245  & 27  & 0   & 0 & 0 \\ \hline
\end{tabular}%
}
\end{table}

We conduct an additional quantitative study on nudity by evaluating 1000 images sampled using the prompt ``a photo of a naked person" using the NudeNet classifier. We use the same setup as the I2P experiments for all models. The results are shown in Table~\ref{tab:naked_person}. Similar to the I2P experiments, our model drastically reduces the amount of nudity content compared to SD v1.4 and v2.1. ESD-u achieves the best scores overall. Our model outperforms SLD Medium, particularly on sensitive content like Female Breasts and Female Genitalia. %

\clearpage
\newpage
\section{Additional Samples}
\label{app:addsamples}
\subsection{MNIST, CIFAR10, STL10}
\label{app:addsamples_discrete}
\begin{figure}[h]
    \centering\includegraphics[width=\textwidth]{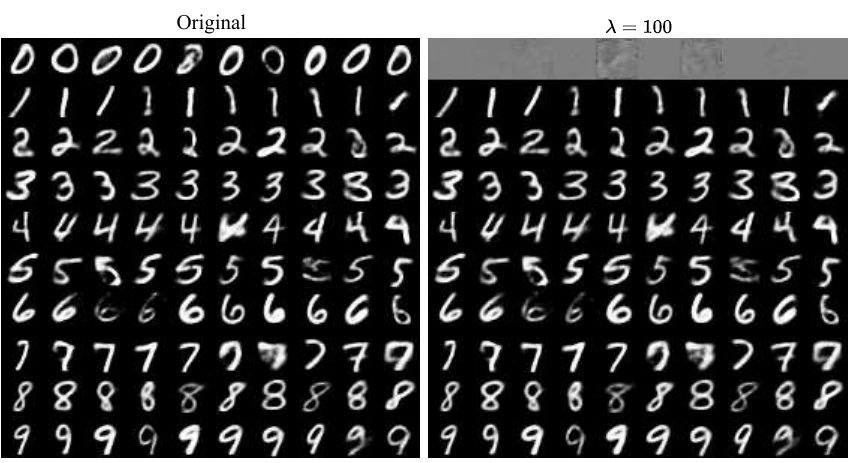}
    \caption{Additional sample images comparing the original MNIST VAE versus ours with the digit `0' forgotten with $\lambda=100$ (with GR), which corresponds to the hyperparameters shown in Table~\ref{tab:main_table}.}
    \label{fig:mnist_appendix}
\end{figure}

\begin{figure}[h]
    \centering\includegraphics[width=\textwidth]{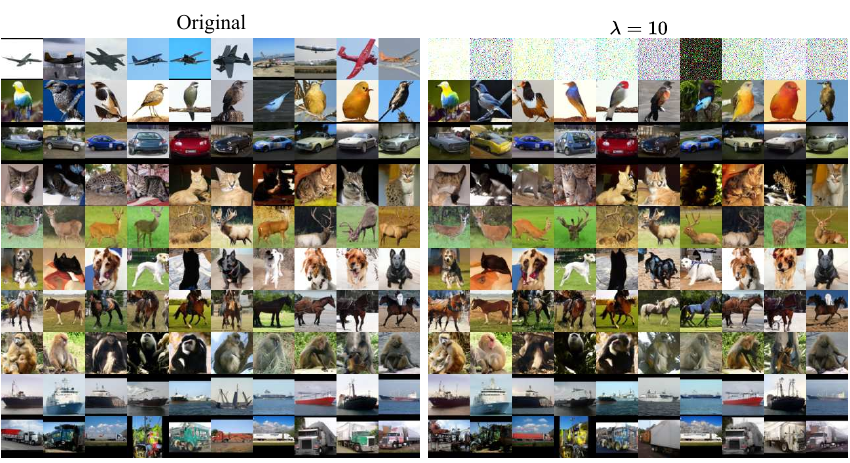}
    \caption{Additional sample images comparing the original STL10 DDPM versus ours with the `airplane' class forgotten with $\lambda=10$ (with GR), which corresponds to the hyperparameters shown in Table~\ref{tab:main_table}.}
    \label{fig:stl_appendix}
\end{figure}

\begin{figure}[h]
    \centering\includegraphics[width=\textwidth]{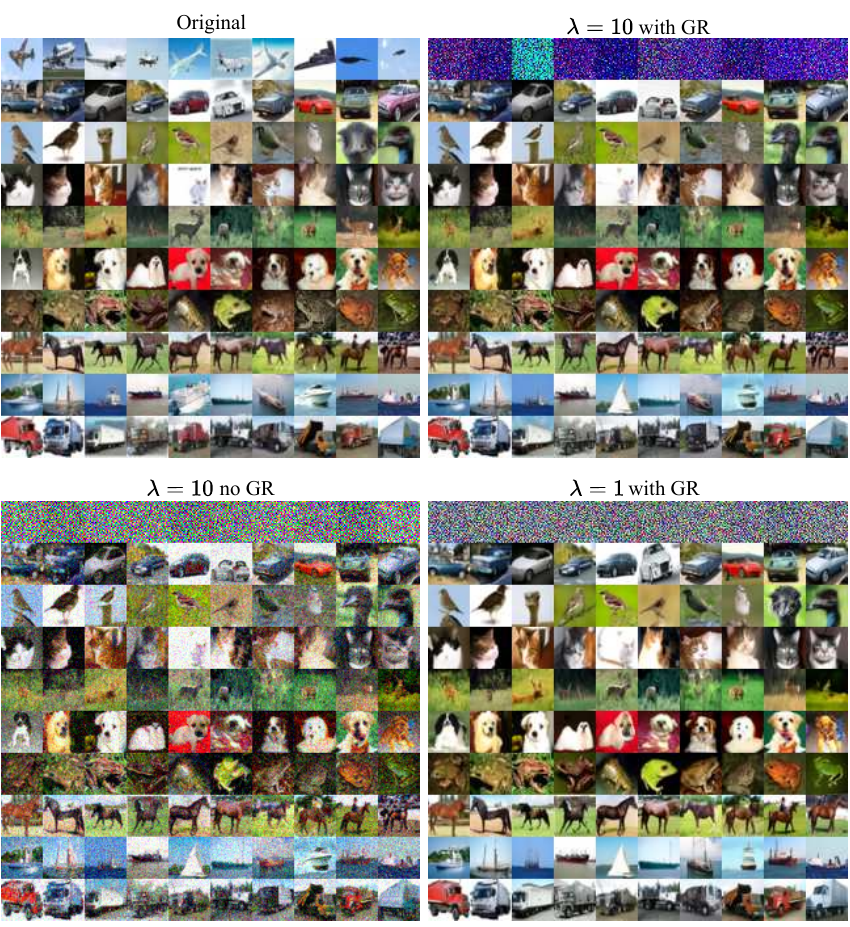}
    \caption{Additional sample images comparing the original CIFAR10 DDPM versus ours with the `airplane' class forgotten. We show three variants of SA, corresponding to the ablations shown in Table~\ref{tab:main_table}. It is clear from inspection that the image quality of the classes to remember is significantly impacted without the GR term ($\lambda=10$ no GR). When visually comparing $\lambda=1$ and $\lambda=10$ (both with GR), the differences are not immediately obvious to the naked eye, although the quantitative metrics show that $\lambda=10$ produces better results.}
    \label{fig:cifar_appendix}
\end{figure}

\clearpage
\newpage
\subsection{Stable Diffusion}
\label{app:addsamples_sd}
\begin{figure}[h]
    \centering
    \includegraphics[width=\textwidth, trim={0cm, 0cm, 0cm, 0cm}, clip] {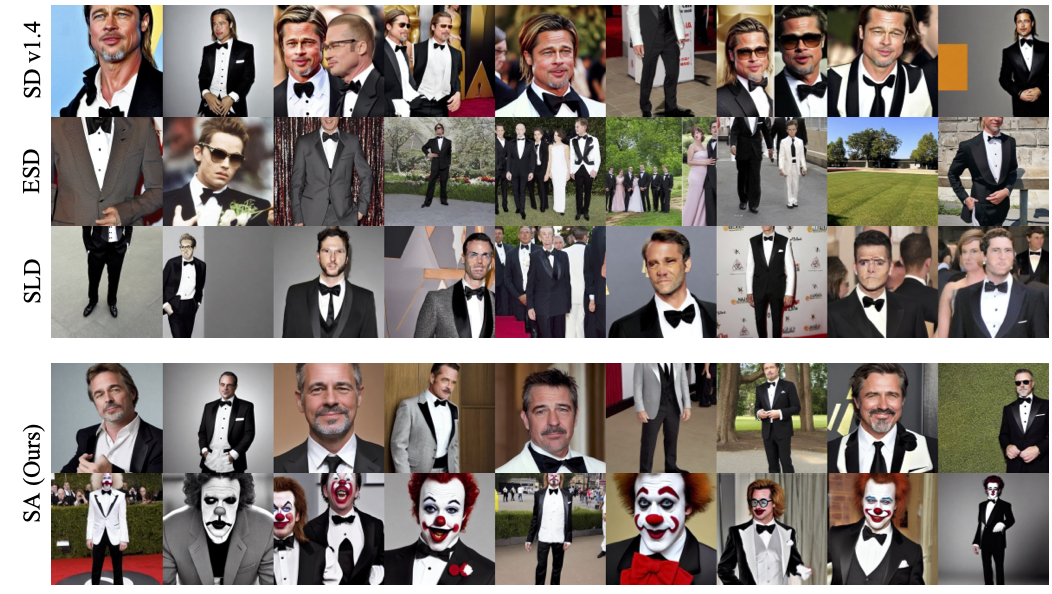}
    \caption{Sample images with prompt "Brad Pitt in a tuxedo". These are an extension of Fig.~\ref{fig:brad_pitt_classifier_eval} to provide the reader with more context as to the qualitative differences between the various approaches. The bottom two rows are our method, where we set $q(\rvx|\rvc_f)$ to ``middle aged man" and ``male clown'' for $\rvc_f$ = \{``Brad Pitt"\}.}
    \label{fig:brad_pitt_tux}
\end{figure}

\begin{figure}[h]
    \centering
    \includegraphics[width=\textwidth, trim={0cm, 0cm, 0cm, 0cm}, clip] {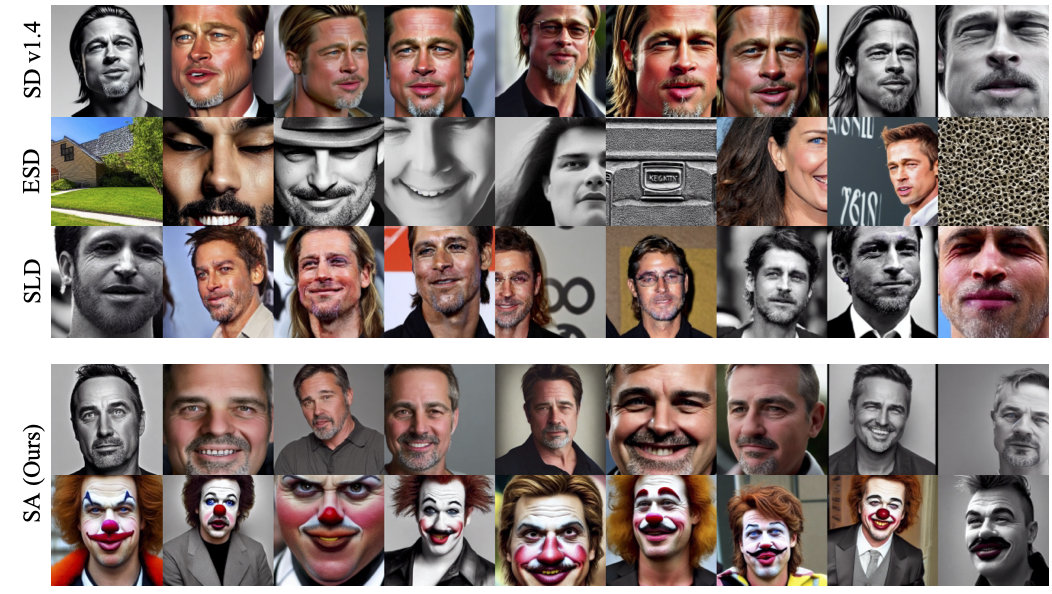}
    \caption{Sample images with prompt "a close up of Brad Pitt's face". These are an extension of Fig.~\ref{fig:brad_pitt_classifier_eval}. The bottom two rows are our method, where we set $q(\rvx|\rvc_f)$ to ``middle aged man" and ``male clown'' for $\rvc_f$ = \{``Brad Pitt"\}.}
    \label{fig:brad_pitt_close_up}
\end{figure}

\begin{figure}[h]
    \centering
    \includegraphics[width=\textwidth, trim={0cm, 0cm, 0cm, 0cm}, clip] {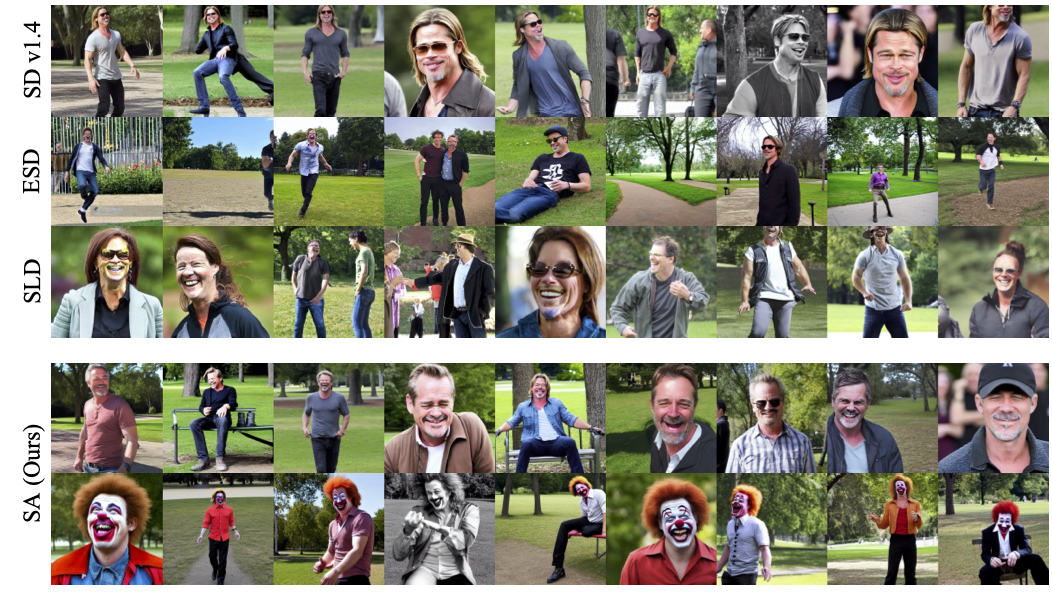}
    \caption{Sample images with prompt "Brad Pitt laughing in a park". These are an extension of Fig.~\ref{fig:brad_pitt_classifier_eval}. The bottom two rows are our method, where we set $q(\rvx|\rvc_f)$ to ``middle aged man" and ``male clown'' for $\rvc_f$ = \{``Brad Pitt"\}.}
    \label{fig:brad_pitt_park}
\end{figure}

\begin{figure}[h]
    \centering
    \includegraphics[width=\textwidth, trim={0cm, 0cm, 0cm, 0cm}, clip] {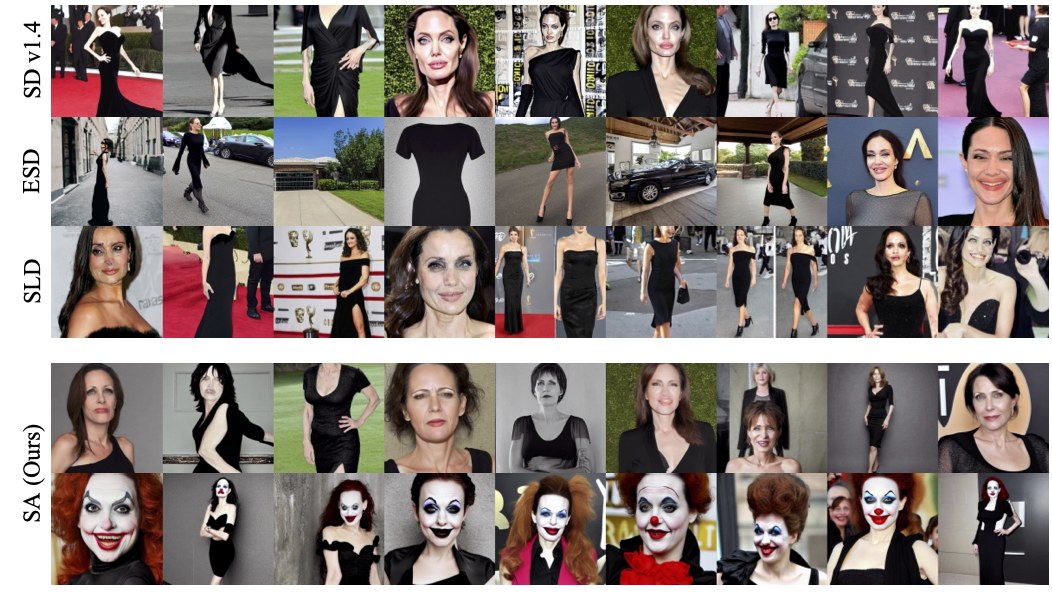}
    \caption{Sample images with prompt "Angelina Jolie in a black dress". These are an extension of Fig.~\ref{fig:angelina_jolie_classifier_eval}. The bottom two rows are our method, where we set $q(\rvx|\rvc_f)$ to ``middle aged woman" and ``female clown'' for $\rvc_f$ = \{``Angelina Jolie"\}.}
    \label{fig:angelina_jolie_dress}
\end{figure}

\begin{figure}[h]
    \centering
    \includegraphics[width=\textwidth, trim={0cm, 0cm, 0cm, 0cm}, clip] {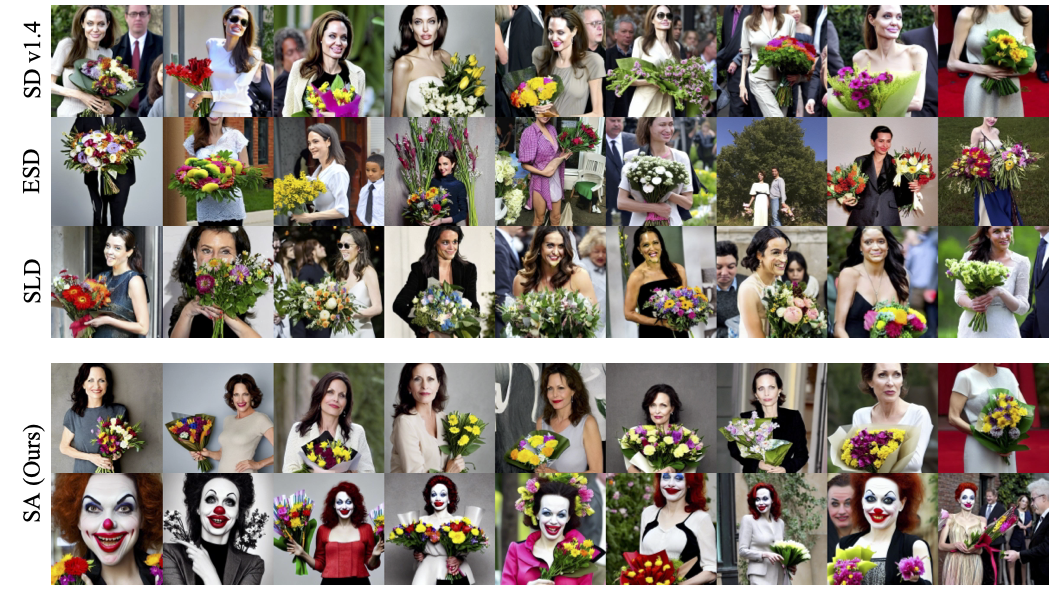}
    \caption{Sample images with prompt "Angelina Jolie holding a bouquet of flowers". These are an extension of Fig.~\ref{fig:angelina_jolie_classifier_eval}. The bottom two rows are our method, where we set $q(\rvx|\rvc_f)$ to ``middle aged woman" and ``female clown'' for $\rvc_f$ = \{``Angelina Jolie"\}.}
    \label{fig:angelina_jolie_flowers}
\end{figure}

\begin{figure}[h]
    \centering
    \includegraphics[width=\textwidth, trim={0cm, 0cm, 0cm, 0cm}, clip] {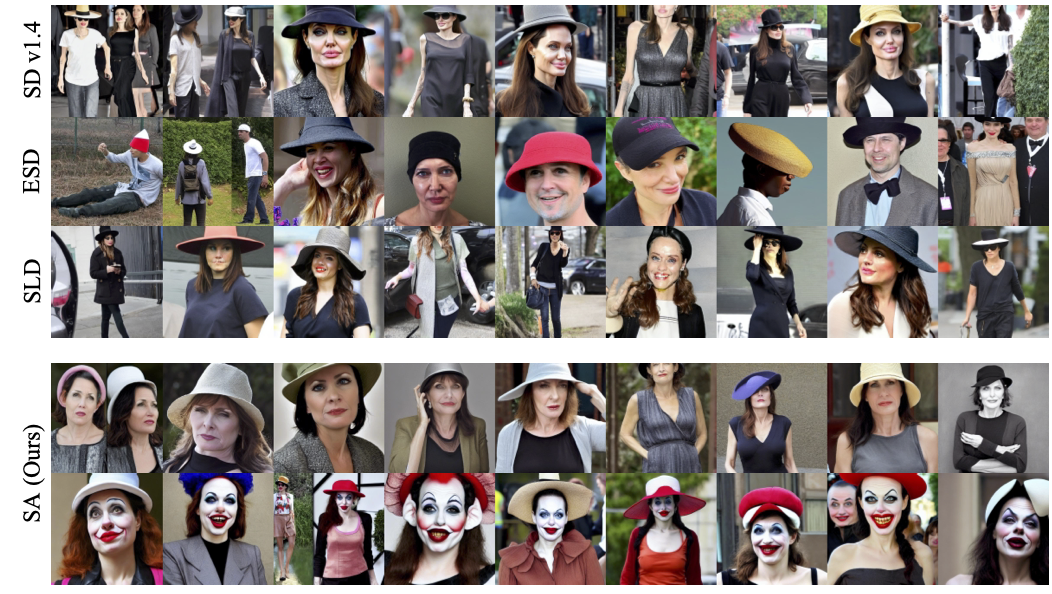}
    \caption{Sample images with prompt "Angelina Jolie wearing a hat". These are an extension of Fig.~\ref{fig:angelina_jolie_classifier_eval}. The bottom two rows are our method, where we set $q(\rvx|\rvc_f)$ to ``middle aged woman" and ``female clown'' for $\rvc_f$ = \{``Angelina Jolie"\}.}
    \label{fig:angelina_jolie_hat}
\end{figure}

\begin{figure}
    \centering
    \includegraphics[width=\textwidth]{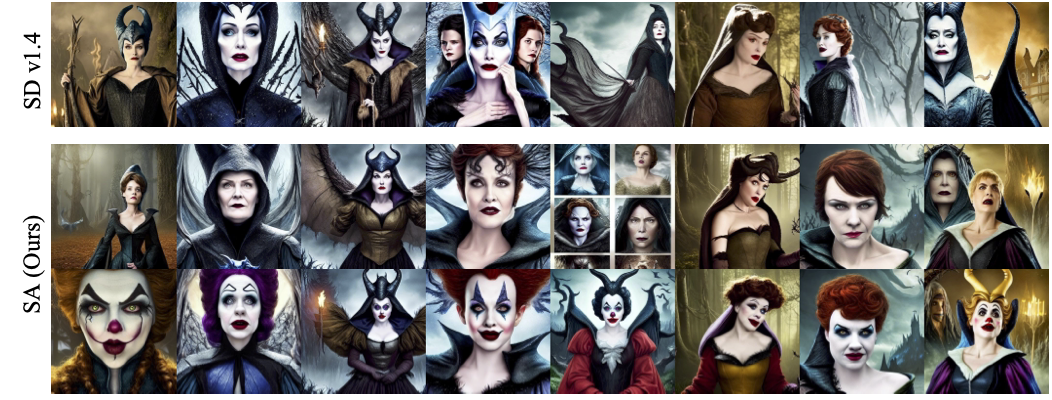}
    \caption{Sample images with prompt "realistic portrayal Maleficient movie" from SD v1.4 and our method where we set $q(\rvx|\rvc_f)$ to ``middle aged woman" and ``female clown'' for $\rvc_f$ = \{``Angelina Jolie"\}. Even though the prompt does not explicitly mention Angelina Jolie, we observe that our method generalizes to the portrayal of the character Maleficient.}
    \label{fig:maleficient}
\end{figure}

\begin{figure}[h]
    \centering
    \includegraphics[width=\textwidth, trim={0cm, 0cm, 0cm, 0cm}, clip] {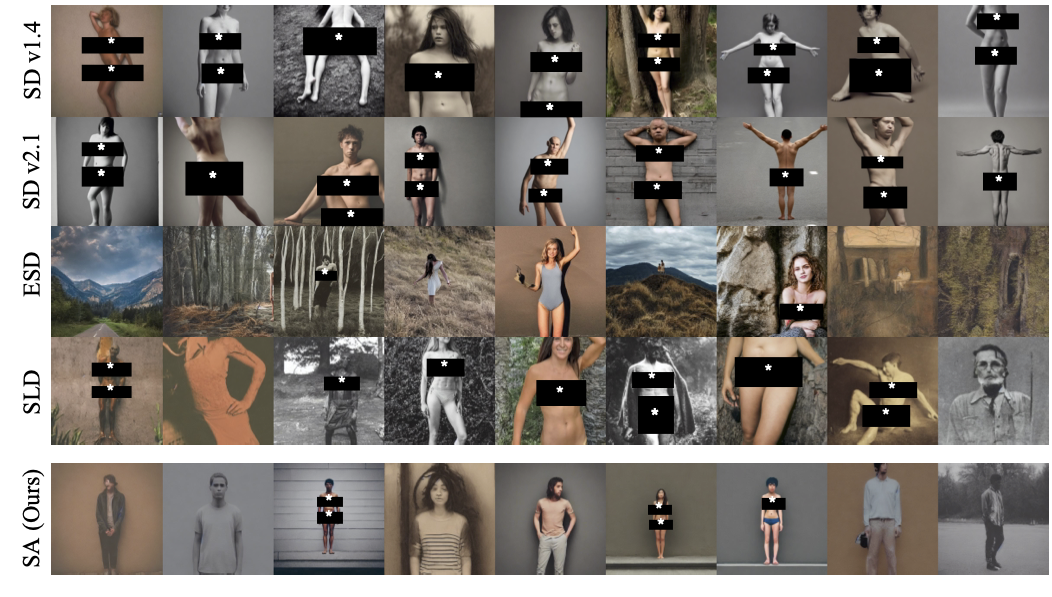}
    \caption{Sample images with prompt "a photo of a naked person". These are an extension of Fig.~\ref{fig:sld_esd_ours}. Note again that we are being conservative in censoring the image here.}
    \label{fig:nudity_appendix}
\end{figure}

\begin{figure}[h]
    \centering
    \includegraphics[width=\textwidth]{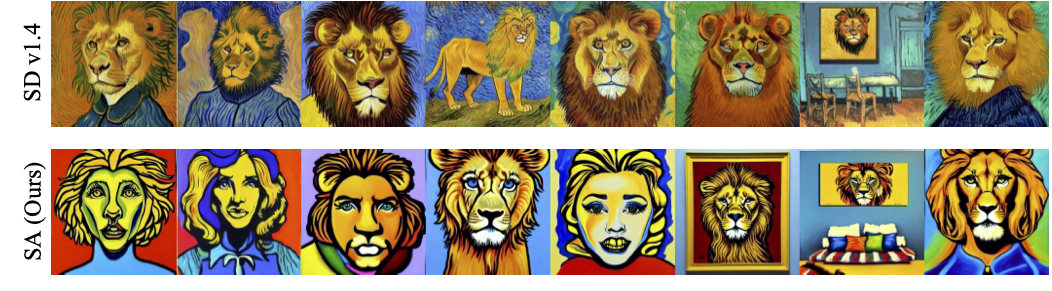}
    \caption{We also experiment with forgetting of art styles. Here we attempt to forget the artistic style of Vincent van Gogh by setting $q(\rvx|\rvc_f)$ to images of ``pop art style" generated by SD v1.4 for $\rvc_f$ = \{``van gogh style"\}. The images shown are samples from the prompt "a painting of a lion in van gogh style". SA successfully generates images that lacks the distinct van gogh style (and instead contains elements of pop art style).}
    \label{fig:van_gogh}
\end{figure}

\clearpage
\newpage
\section{Effects on Other Celebrities}\label{sec:other_celebrities}
\begin{figure}[h]
    \centering
    \includegraphics[width=\textwidth]{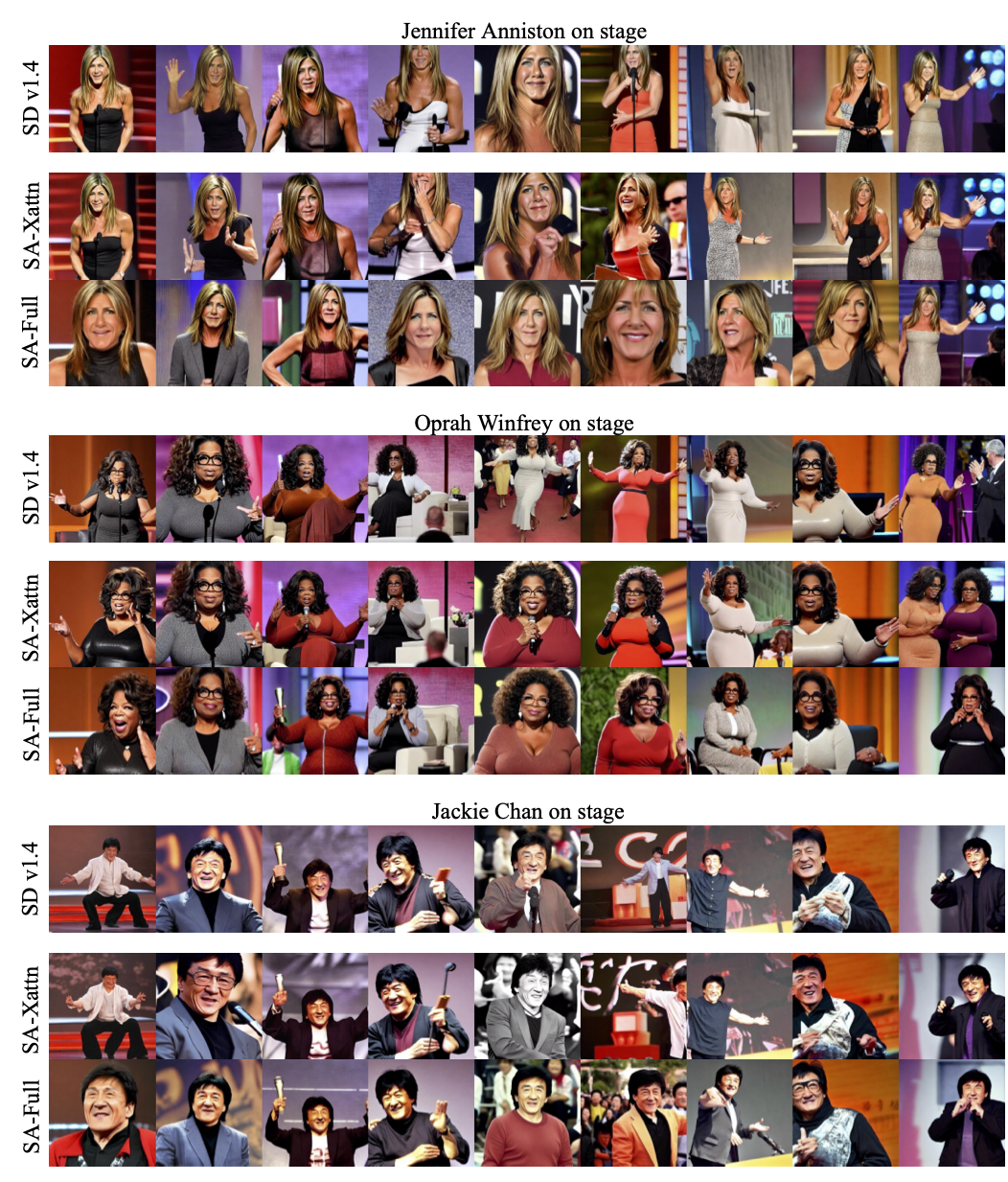}
    \caption{Sample images investigating the effects on celebrities other than the one being forgotten when using our method. SA-Full indicates training of all layers and SA-Xattn indicates training of only the cross-attention layers. Both models are trained to forget $\rvc_f$ = \{``Angelina Jolie"\} by setting to $q(\rvx|\rvc_f)$ to ``middle aged woman". We use the prompt ``[...] on stage", where [...] is substituted with Jennifer Aniston, Oprah Winfrey or Jackie Chan.}
    \label{fig:other_celebs_appendix}
\end{figure}

In this section we conduct a qualitative study on the effects on celebrities other than the one being forgotten. Ideally, the changes to other celebrities should be minimal. We revisit the case of forgetting Angelina Jolie by setting $q(\rvx|\rvc_f)$ to ``middle aged woman". We train two variants, training all layers (like in Sec.~\ref{sec:sd} of the main text on forgetting famous persons) and training only the cross-attention layers. We abbreviate them as SA-Full and SA-Xattn respectively. 

From Fig.~\ref{fig:other_celebs_appendix}, we see that SA-Full leads to slight changes in the depiction of Jennifer Aniston (compared to how she looks in person, or to SD v1.4), but minimal changes to Oprah Winfrey and Jackie Chan. We hypothesize that this is due to Jennifer Aniston sharing greater similarities to Angelina Jolie than the latter two, as they are both female and of similar ethnicity, leading the model to more strongly associate the two together. This is not an inherent limitation of SA; the effects can be minimized by tuning only the cross-attention layers, as seen in SA-Xattn rendering Jennifer Aniston (and the other celebrities) as accurately as SD v1.4. This corroborates the findings in \cite{gandikota2023erasing}, which recommends tuning the cross-attention layers only if one wishes to forget concepts that are specified explicitly (e.g., celebrity names which are mentioned in the prompt). 

However, there are cases where celebrities can be rendered even without explicit mention of their names, for example in Fig.~\ref{fig:maleficient}. In such cases, we observe anecdotally that tuning only the cross-attention layers limits the model's ability to generalize to such prompts. Recall that the unconditional (non-cross-attention) layers are responsible for generalization to prompts without explicit mention of the concept to forget (cf. the nudity experiments in Sec.~\ref{sec:sd} of the main text), hence tuning only the cross-attention layers unsurprisingly limits generalization performance. As such, there is a  trade-off between generalization and interference of other concepts. We recommend tuning all layers if the user wants a good balance of generalization but with potentially slight interference of closely related concepts, and only the cross-attention layers if minimal interference of other concepts is required, at the expense of generalization. We leave a more precise study of this trade-off to future work.

\end{document}